\theoremstyle{plain}
\newtheorem{theorem}{Theorem}[section]
\newtheorem{proposition}[theorem]{Proposition}
\newtheorem{lemma}[theorem]{Lemma}
\newtheorem{corollary}[theorem]{Corollary}
\theoremstyle{definition}
\newtheorem{definition}[theorem]{Definition}
\newtheorem{assumption}[theorem]{Assumption}
\theoremstyle{remark}
\newtheorem{example}[theorem]{Example}
\newcommand{\norm}[1]{\left\lVert#1\right\rVert}
\newcommand{\brac}[1]{\left(#1\right)}
\newcommand{\R}{\mathbb{R}}
\newcommand{\C}{\mathbb{C}}
\newcommand{\D}{\mathbf{D}}
\icmltitlerunning{On Numerical Integration in Neural ODE}
\begin{document}

\twocolumn[
\icmltitle{On Numerical Integration in Neural Ordinary Differential Equations}




\begin{icmlauthorlist}
\icmlauthor{Aiqing Zhu}{amss,ucas}
\icmlauthor{Pengzhan Jin}{pu}
\icmlauthor{Beibei Zhu}{ustb}
\icmlauthor{Yifa Tang}{amss,ucas}
\end{icmlauthorlist}
\icmlaffiliation{amss}{LSEC, ICMSEC, Academy of Mathematics and Systems Science, Chinese Academy of Sciences, Beijing 100190, China}
\icmlaffiliation{ucas}{School of Mathematical Sciences, University of Chinese Academy of Sciences, Beijing 100049, China}
\icmlaffiliation{pu}{School of Mathematical Sciences, Peking University, Beijing 100871, China}
\icmlaffiliation{ustb}{School of Mathematics and Physics, University of Science and Technology Beijing, Beijing 100083, China}

\icmlcorrespondingauthor{Yifa Tang}{tyf@lsec.cc.ac.cn}

\icmlkeywords{Machine Learning, ICML}

\vskip 0.3in
]



\printAffiliationsAndNotice{}  

\begin{abstract}
The combination of ordinary differential equations and neural networks, i.e., neural ordinary differential equations (Neural ODE), has been widely studied from various angles. However, deciphering the numerical integration in Neural ODE is still an open challenge, as many researches demonstrated that numerical integration significantly affects the performance of the model. In this paper, we propose the inverse modified differential equations (IMDE) to clarify the influence of numerical integration on training Neural ODE models. IMDE is determined by the learning task and the employed ODE solver. It is shown that training a Neural ODE model actually returns a close approximation of the IMDE, rather than the true ODE. With the help of IMDE, we deduce that (i) the discrepancy between the learned model and the true ODE is bounded by the sum of discretization error and learning loss; (ii) Neural ODE using non-symplectic numerical integration fail to learn conservation laws theoretically. Several experiments are performed to numerically verify our theoretical analysis.
\end{abstract}

\section{Introduction}
Recently, Neural Ordinary Differential Equations (Neural ODE) \cite{chen2018neural} were proposed as a continuous model by embedding neural networks into continuous dynamical systems, and became an important option of model architecture. They offered dynamical systems perspectives on deep learning researches and thus have attracted increasing attention. For example, Yan et al. \yrcite{yan2020on} proposed TisODE to further enhance robustness according to the non-intersecting characteristics of ODE. Botev et al. \yrcite{botev2021priors} experimentally found that Neural ODE are the most effective approach to model latent dynamics from images due to continuous and time-reversible priors.

Despite ODE dynamics benefit models, we have to apply a numerical integration in Neural ODE, which prevents the model from being consistent with the design. Gusak et al. \yrcite{gusak2020towards} and Zhuang et al. \yrcite{zhuang2020adaptive} observed that changing the numerical solver yields performance degradation. Ott et al. \yrcite{ott2021resnet} and Queiruga et al. \yrcite{queiruga2020continuous} studied how the numerical integration influences the Neural ODE model and they proposed to adjust the step size and use high order solver to train Neural ODE, respectively. High-accuracy integration is able to loosen the impact of discretization error, however, quantifying such influences stills remains open. In this work, we also focus on the numerical integration in Neural ODE but we aim to decipher it theoretically and clarify the influence rigorously.

The main ingredient of this work is the formal analysis \cite{feng1991formal}. Historically, modified differential equation is an important tool for understanding the numerical behavior of solving ODE \cite{eirola1993aspects,feng1991formal,feng1993formal,sanz1992symplectic,yoshida1993recent}. The methodology is to interpret the numerical solution of the original system as the exact solution of a perturbed equation. Herein, this idea is tailored to Neural ODE. We first search for a perturbed differential equation such that its numerical solution matches the exact solution of the true system and then show that training Neural ODE returns a close approximation of this perturbed equation. The perturbed equation is named as inverse modified differential equation (IMDE) in this paper. Several experiments are performed to verify our theoretical analysis. Although the closed-form expression of the IMDE is inaccessible as it is calculated via the true system, we can still clarify the influence of numerical integration on Neural ODE with the help of IMDE. In summary, we list several statements derived via the proposed IMDE that will be documented in detail later:
\begin{itemize}
    \item The trained Neural ODE model is a close approximation of the IMDE, i.e., the difference between the learned Neural ODE model and the truncation of the IMDE is bounded by the sum of the learning loss and a discrepancy which can be made sub exponentially small.
    \item The difference between the learned Neural ODE model and the true hidden system is bounded by the sum of the discretization error $Ch^p$ and the learning loss, where $h$ is the discrete step and $p$ is the order of the numerical integrator.
    \item Neural ODE using non-symplectic numerical integration fail to learn conservation laws theoretically.
\end{itemize}

\subsection{Related Works}
Continuous models combining ODE and neural networks have a long history. They had already been developed and implemented to learn hidden dynamics decades ago \cite{anderson1996comparison,gonzalez1998identification,rico1994continuous,rico1993continuous}. Recently, these insights have again attracted more and more attention. The connection between dynamical systems and deep neural networks was studied in \cite{e2017proposal,e2019mean,li2017maximum,sonoda2019transport}. Neural ODE were proposed as a continuous approximation of the ResNets architecture in \cite{chen2018neural}. With their successful applications across diverse scientific disciplines \cite{botev2021priors,rubanova2019latent,yildiz2019ode2vae}, there have been extensive works studying this learning model in terms of optimization algorithms \cite{xia2021heavy, zhuang2020adaptive}, approximation capabilities \cite{zhang2020approximation}, robustness properties \cite{yan2020on}, augmentation strategies \cite{dupont2019augmented,massaroli2020dissecting} and variant architectures \cite{jia2019neural,norcliffe2021neural,zhang2019anode}.

This work concentrates on numerical integration in Neural ODE. Pal et al. \yrcite{pal2021opening} proposed a novel regularization for Neural ODE based on the internal cost of the numerical integration. Poli et al. \yrcite{poli2020hypersolver} explored the interplay between Neural ODE and numerical integration, introduced hypersolvers for fast inference. Based on numerical analysis theory, Krishnapriyan et al. \yrcite{krishnapriyan2022learning} developed a convergence test for selecting solver that makes the Neural ODE learn meaningfully continuous dynamics. As reported in \cite{gusak2020towards,zhuang2020adaptive}, changing the numerical solver yields performance degradation. Ott et al. \yrcite{ott2021resnet} and Queiruga et al. \yrcite{queiruga2020continuous} discussed this problem further. They observed that if training using coarse discretization, then testing using another solver of equal or smaller accuracy results in significantly lower performance. Thus, they proposed using a more accurate solver (adjusting the step size and using high order solver, respectively). Due to the discretization error, more accurate solver can only loosen this issue. The aim of our work is to clarify the influence of the numerical integration employed in Neural ODE models mathematically.

Neural ODE can be utilized as a data-driven technique for the discovery of latent dynamics \cite{botev2021priors, huh2020time, raissi2018multistep}. For this task, Keller and Du \yrcite{keller2021discovery} and Du et al. \yrcite{du2021discovery} provided convergence and stability analysis for Linear Multistep Neural Networks (LMNets) \cite{raissi2018multistep}. They proved that the grid error of LMNets is bounded by the sum of discretization error and approximation error under auxiliary initial conditions. Different from their works, the IMDE proposed in this paper provides a framework for the error analysis of Neural ODE for learning dynamical systems. As shown in \cref{thm:imdeerror}, our error bound is similar to theirs but our results can be applied to more general Neural ODE models.

Modified differential equations (MDE) are well-established tools for numerical analysis of solving ODE \cite{eirola1993aspects,feng1993formal,sanz1992symplectic,yoshida1993recent}. In the deep learning community, Lu et al. \yrcite{lu2018beyond} used the concept of MDE to justify the performance boost of the proposed models. Fran{\c{c}}a et al. \yrcite{fran2021on} employed MDE to study the fine behavior of gradient-based optimization. In this paper, the proposed IMDE is inspired by MDE and modified integrator \cite{chartier2007numerical} and our arguments rely on fundamental techniques of MDE \cite{feng1991formal,hairer1997life,reich1999backward}.

\section{Preliminaries}\label{sec:pre}
We begin with some notations. Consider autonomous systems of first-order ordinary differential equations
\begin{equation}\label{eq:ODE}
\frac{d}{dt}y(t) = f(y(t)),\quad y(0)=x,
\end{equation}
where $y(t) \in \R^D$, $f:\R^D \rightarrow \R^D$ is smooth and $x$ is the initial value. A non-autonomous system $\frac{d}{dt}y(t) = f(y(t),t)$ can be brought into this form by adding the variable $y_{D+1}=t$ to $y(t)$ and appending the equation $\frac{d}{dt}t = 1$. For fixed $t$, $y(t)$ can be regarded as a function of its initial value $x$. We denote
\begin{equation*}
\phi_{t}(x):=y(t) = x + \int_0^t f(y(\tau))d\tau,
\end{equation*}
which is known as the time-$t$ flow map of dynamical system (\ref{eq:ODE}). In general, we chose a numerical integrator $\Phi_{h}$ that approaches $\phi_{h}$ and compose it to obtain the numerical solution. A common choice of the numerical integrator is the Runge-Kutta method: 
\begin{equation}\label{runge-kutta}
\begin{aligned}
&v_i =x+h\sum_{j=1}^Ia_{ij}f(v_j),\quad i=1,\cdots ,I,\\
&\Phi_{h}(x) = x+h\sum_{i=1}^Ib_if(v_i),
\end{aligned}
\end{equation}
where $x$ is the initial value, $h$ is the discrete step. The coefficients $a_{ij}$, $b_i$ with $i,j=1,\cdots ,I$ fully characterize the method. In order to emphasize specific differential equation, we will add the subscript $f$ and denote $\phi_t$ as $\phi_{t,f}$ and $\Phi_h$ as $\Phi_{h,f}$ .

\subsection{Neural ODE}
Neural Ordinary Differential Equations (Neural ODE) \cite{chen2018neural} are continuous models by embedding neural networks into continuous dynamical systems. In this work, we consider the empirical risk optimization problem
\begin{equation*}
L = \frac{1}{N}\sum_{n=1}^N l(\phi_{T, f_{\theta}}(x_n), z_n),
\end{equation*}
where $\{(x_n,z_n)\}_{n=1}^N$ is the sampled training data, $l(\cdot,\cdot)$ is a loss function that is minimized when its two arguments are equal. $\phi_{T, f_{\theta}}$ is a Neural ODE model with a trainable neural network $f_{\theta}$\footnote{Under this form, time dependence can be added according to
$$\hat{x}_n = (x_n,0),\ \hat{z}_n = (z_n, T),\ \hat{f}_{\theta}= (f_{\theta},1).$$}.
Depending on the application, input or output layers are employed but we concentrate on the hidden state of the ODE layer in this paper. Exact evaluating $\phi_{T, f_{\theta}}$ is intractable and we have to use an ODE solver to approximate $\phi_{T, f_{\theta}}$. Dividing $T$ in $S$ equally-spaced intervals, the $\phi_{T, f_{\theta}}$ can be approximated by $S$ compositions of a predetermined one-step numerical integrator $\Phi_h$ (e.g. Runge-Kutta method (\ref{runge-kutta})),
\begin{equation*}
\begin{aligned}
\phi_{T, f_{\theta}} \approx &\underbrace {\Phi_{h,f_{\theta}} \circ \cdots \circ \Phi_{h,f_{\theta}}}_{\text{ $S$ compositions}}(x)\\
=& \brac{\Phi_{h,f_{\theta}}}^S(x),
\end{aligned}
\end{equation*}
where $h=T/S$ is the discrete step. Therefore, the practical input of loss function is given by the predetermined ODE solver, i.e.,
\begin{equation*}
L = \frac{1}{N}\sum_{n=1}^N l\brac{\brac{\Phi_{h,f_{\theta}}}^S(x_n), z_n}.
\end{equation*}

\section{Main Results}
Throughout this section we assume that there exists a true (but inaccessible) ODE solution such that $z_n = \phi_{T,f}(x_n)$. If Neural ODE model is employed due to the ODE dynamic benefits (e.g., improving robustness \cite{yan2020on} or ODE prior \cite{botev2021priors}), it is essential that the assumption holds and the trained model is an approximation of the true ODE. With this assumption, we are able to clarify the influence of the numerical integration on training Neural ODE models by studying the change of approximation target.

\subsection{Inverse Modified Differential Equations}\label{sec:imde}
We aim to find a perturbed differential equation
\begin{equation}\label{eq:imde}
\begin{aligned}
\frac{d}{dt}\tilde{y}(t)=&f_h(\tilde{y}(t))\\
=& f_0(\tilde{y})+hf_1(\tilde{y})+h^2f_2(\tilde{y})+\cdots,
\end{aligned}
\end{equation}
such that $\Phi_{h,f_h}(x) = \phi_{h,f}(x)$ formally. Here, identity is understood in the sense of the formal power series in $h$ without taking care of convergence issues of \cref{eq:imde}.

To obtain $f_h$, we first expand $\phi_{h,f}(x)$ into a Taylor series around $h = 0$,
\begin{equation}\label{eq:exasolu}
\begin{aligned}
\phi_{h,f}(x)=&x+hf(x)+\frac{h^2}{2}f'f(x)\\
&+\frac{h^3}{6}(f''(f,f)(x)+f'f'f(x))+\cdots .
\end{aligned}
\end{equation}
Here, the notation $f'(x)$ is a linear map (the Jacobian), the second order derivative $f''(x)$ is a symmetric bilinear map, and similarly for higher order derivatives described as a tensor. A general expansion formula for (\ref{eq:exasolu}) is given in \cref{sec:Expanding Exact Solution}.

Next, the numerical solution can be expanded as
\begin{equation}\label{eq:numsolu}
\Phi_{h,f_h}(x) = x + hd_{1,f_h}(x) + h^2d_{2,f_h}(x) + \cdots,
\end{equation}
where the functions $d_{j,f_h}$ are given and typically composed of $f_h$ and its derivatives. Expansion formulas for Runge-Kutta methods are given in \cref{sec:Expanding Numerical Solution}. For consistent integrators\footnote{An integrator is consistent if its order is not less than $1$. A Runge-Kutta method (\ref{runge-kutta}) is consistent if $\sum_{i=1}^Ib_i=1$.},
\begin{equation*}
d_{1,f_h}(x) = f_h(x) = f_0(x)+hf_1(x)+h^2f_2(x)+\cdots.
\end{equation*}

\begin{figure*}[ht]
    \begin{center}
    \centerline{\includegraphics[width=\linewidth]{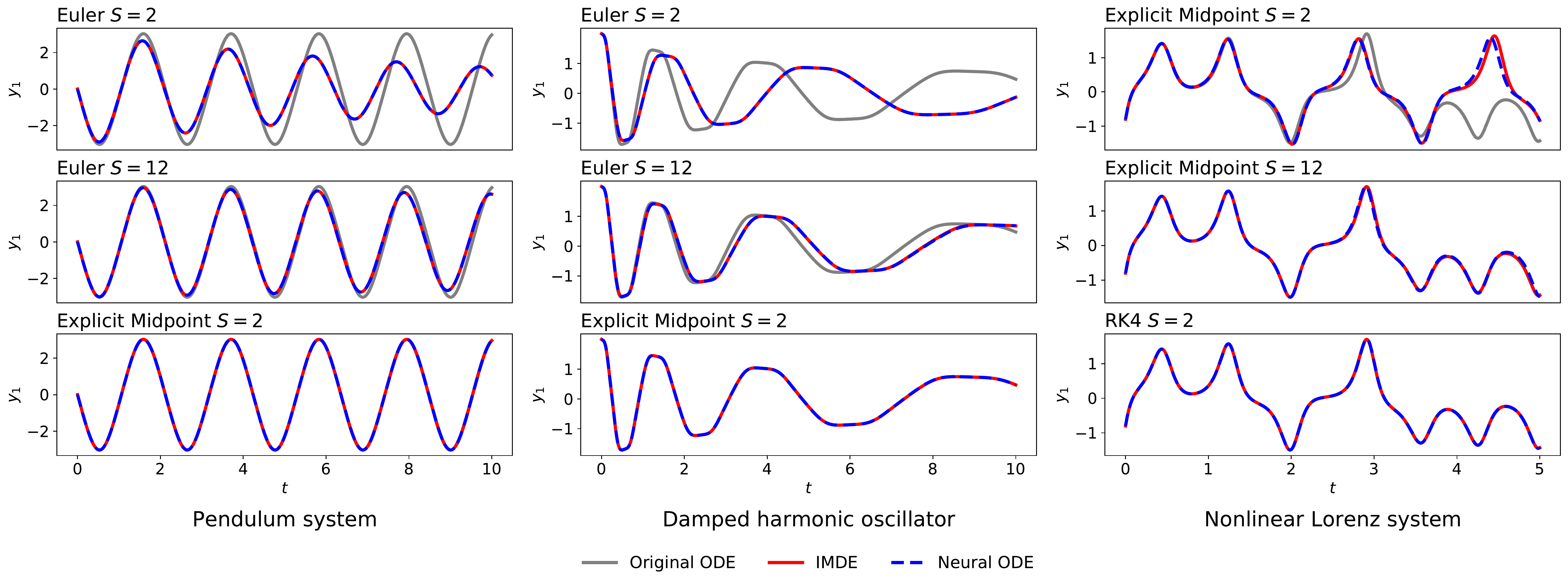}}
    \vskip -0.1in
    \caption{The first component of the trajectories of original, modified and learned equations. The Neural ODE models are trained on three tasks where the used numerical solvers are $S$ compositions of the Euler method (of order 1), the explicit midpoint rule (of order 2) and the RK4 method (of order 4), respectively. The trained Neural ODE capture the evolution of the corresponding IMDE more accurately. With the improvement of the accuracy of the solver used in training, the learned models are closer to the true systems since the discrepancy between the IMDE and the original ODE is reduced. Experimental details are presented in \cref{sec:Experimental Details}.}
    \label{fig:tras}
    \end{center}
\vskip -0.2in
\end{figure*}

In $h^id_{i,f_h}(x) $, the power of $h$ of the terms containing $f_k$ is at least $k+i$. Thus the coefficient of $h^{k+1}$ in (\ref{eq:numsolu}) is
\begin{equation*}
f_k+ \cdots,
\end{equation*}
where the ``$\cdots$'' indicates residual terms composed of $f_j$ with $j\leq k-1$ and their derivatives. By comparison of the coefficients of equal powers of $h$ in (\ref{eq:exasolu}) and (\ref{eq:numsolu}) such that these two series coincide term by term, unique functions $f_k$ in \cref{eq:imde} are obtained recursively. In \cref{sec:Two Examples for Calculating IMDE}, we present some examples illustrating the process of the above calculation. In this paper, we name the \cref{eq:imde} obtained via the above process as \textit{inverse modified differential equation (IMDE)} since it is inspired by the MDE and learning ODE is an inverse problem.

Furthermore, we obtain that formally
\begin{equation*}
\begin{aligned}
\brac{\Phi_{h,f_h}}^S(x)=\brac{\phi_{h,f}}^S(x)=\phi_{Sh,f}(x),
\end{aligned}
\end{equation*}
and the training process of Neural ODE is to minimize the difference between $\brac{\Phi_{h,f_h}}^S(x)$ and $\brac{\Phi_{h,f_{\theta}}}^S(x)$. Thus it is natural to conjecture that the trained $f_{\theta}$ is a close approximation of $f_h$. In order to substantiate this claim, we use Neural ODE to learn several benchmark problems that are widely investigated for the discovery of hidden dynamics \cite{du2021discovery, greydanus2019hamiltonian, keller2021discovery, yu2021onsagernet}. Here, the training data is generated by a known system, $\mathcal{T} = \{(x_n, \phi_{T, f}(x_n))\}_{n=1}^N$, and we can calculate the corresponding IMDE. We train the Neural ODE model using fixed step solvers with step sizes $T/S$. After training, we employ a dense numerical integration to compute the trajectories of the learned ODE and the IMDE. As displayed in \cref{fig:tras}, training Neural ODE returns approximations of the IMDE, which is consistent with the statement. We will rigorously show that this statement is true under reasonable assumptions in \cref{sec:Rigorous analysis}.

\subsection{Rigorous Analysis}\label{sec:Rigorous analysis}
In \cref{sec:imde}, we neglected the convergence issue\footnote{The series in \cref{eq:imde} does not converge in general.}. For rigorous analysis, we truncate the IMDE and denote the truncation of series in (\ref{eq:imde}) as
\begin{equation*}
f_h^K(y) = \sum_{k=0}^K h^k f_k(y).
\end{equation*}
We aim to derive an error bound between the trained $f_{\theta}$ and $f_h^{K}$ in this subsection to circumvent the convergence issue. To begin with, we introduce some notations. For a compact subset $\mathcal{K} \subset \C^D$, let $\mathcal{B}(x,r) \subset \C^D$ be the complex ball of radius $r>0$ centered at $x\in\C^D$ and let
\begin{equation*}
\mathcal{B}(\mathcal{K}, r) = \bigcup_{x \in \mathcal{K}} \mathcal{B}(x,r).
\end{equation*}
We will work with $l_{\infty}$- norm on $\C^D$ and denote $\norm{\cdot} = \norm{\cdot}_{\infty}$ For a analytic vector field $g$, we define
\begin{equation*}
\norm{g}_{\mathcal{K}} = \sup_{x\in\mathcal{K}}\norm{g(x)}.
\end{equation*}

Now, the main theorem is given as follows.
\begin{theorem}\label{thm:imde}
For $x \in \R^D$ and $r_1, r_2 >0$, a given ODE solver that is $S$ compositions of a Runge-Kutta method $\Phi_{h}$, we denote
\begin{equation*}
\mathcal{L} = \norm{\brac{\Phi_{h,f_{\theta}}}^S -\phi_{Sh,f}}_{\mathcal{B}(x, r_1)}/(Sh),
\end{equation*}
and suppose that the target vector field $f$ and the learned vector field $f_{\theta}$ are analytic and bounded by $m$ on $\mathcal{B}(x,r_1+r_2)$. Then, there exist integer $K=K(h)$ and constants $T_0$, $q$, $\gamma$, $c_1$ that depend on $r_1/m$, $r_2/m$, $S$ and $\Phi_h$, such that, if $0<T<T_0$,
\begin{equation*}
\begin{aligned}
&\norm{f_{\theta}(x) - f_h^K(x)} \leq c_1m e^{-\gamma/h^{1/q}} + \frac{e}{e-1} \mathcal{L},\\
\end{aligned}
\end{equation*}
where $e$ is the base of natural logarithm, $h=T/S$ and $f_h^K$ is the truncated vector field of the IMDE determined by $\Phi_{h}$ and $f$.
\end{theorem}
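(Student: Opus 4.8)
I would prove this by adapting the classical backward error analysis of one-step integrators (the estimates underlying modified differential equations) to the ``inverse'' setting defined in \cref{sec:imde}. The argument splits into three stages: (i) make the recursive construction of the IMDE quantitative, bounding the coefficients $f_k$ and the one-step defect of the truncation $f_h^K$ via analyticity and Cauchy estimates, and then choose the truncation index $K=K(h)$ optimally; (ii) propagate this one-step defect through the $S$ compositions of the Runge--Kutta map to control $(\Phi_{h,f_h^K})^S-\phi_{Sh,f}$; (iii) ``invert'' the $S$-step map to pass from closeness of flows to closeness of the vector fields at $x$, absorbing higher-order terms by a geometric series. I expect stage (i) --- obtaining the sharp growth rate of the $f_k$, and hence the sub-exponential exponent $q$ --- to be the main obstacle.

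\textbf{Stage (i): quantitative IMDE.} Since $f$ (and $f_\theta$) are analytic and bounded by $m$ on $\mathcal{B}(x,r_1+r_2)$, Cauchy's integral formula bounds all derivatives of $f$ on slightly smaller balls, hence bounds the elementary differentials appearing in the Taylor expansion \cref{eq:exasolu} of $\phi_{h,f}$ and the coefficients $d_{j,f_h}$ of the Runge--Kutta expansion \cref{eq:numsolu} (for $h$ small the implicit stages are well defined by contraction). Feeding these into the recursion that defines the $f_k$, and letting the analyticity radius shrink by $r_2/(2K)$ at each of the first $K$ levels --- so that each differentiation on a domain narrowed by $\delta$ costs $1/\delta$ --- yields a bound of the form $\norm{f_k}_{\mathcal{B}(x,\,r_1+r_2/2)} \le m\,(Ck/r_2)^{\mu k}$ for constants $C,\mu$ depending on $\Phi_h$ and $S$. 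This gives a local defect estimate $\norm{\Phi_{h,f_h^K}-\phi_{h,f}}_{\mathcal{B}(x,\,r_1+r_2/2)} \le C\,m\,h\,(CKh/r_2)^{\mu K}$; minimizing the right-hand side over $K$ by taking $K=K(h)\sim(\gamma'/h)^{1/q}$ with $q$ governed by $\mu$ produces the sub-exponential bound $C\,m\,h\,e^{-\gamma/h^{1/q}}$. Tracking all constants uniformly in $S$ through the mixed elementary-differential/stage recursion is the delicate point on which $q$, $\gamma$ and the optimal $K(h)$ all hinge.

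\textbf{Stage (ii): propagation through $S$ steps.} A consistent Runge--Kutta map $\Phi_{h,g}$ is $(1+Lh)$-Lipschitz on the relevant balls for $L$ depending on $m$; since $Sh=T<T_0$ keeps every intermediate iterate inside $\mathcal{B}(x,\,r_1+r_2/2)$, a telescoping/discrete-Gronwall argument gives
\begin{equation*}
\norm{(\Phi_{h,f_h^K})^S-\phi_{Sh,f}}_{\mathcal{B}(x,r_1)} \le e^{LT}\, S\, C\,m\,h\,e^{-\gamma/h^{1/q}} \le C'\,m\,Sh\,e^{-\gamma/h^{1/q}}.
\end{equation*}
Combining with the definition $\mathcal{L}=\norm{(\Phi_{h,f_\theta})^S-\phi_{Sh,f}}_{\mathcal{B}(x,r_1)}/(Sh)$ via the triangle inequality yields $\norm{(\Phi_{h,f_\theta})^S-(\Phi_{h,f_h^K})^S}_{\mathcal{B}(x,r_1)} \le Sh\,\mathcal{L} + C'\,m\,Sh\,e^{-\gamma/h^{1/q}}$.

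\textbf{Stage (iii): inversion and conclusion.} Expanding $S$ compositions of the one-step map gives, for analytic $g$ bounded by $m$, a representation $(\Phi_{h,g})^S(y)=y+Sh\,g(y)+(Sh)^2 R_g(y)$ on a slightly shrunk ball, with $R_g$ analytic, bounded in terms of $m$, and Lipschitz in $g$: $\norm{R_g-R_{\tilde g}}\le C''\norm{g-\tilde g}$. Applying this with $g=f_\theta$, $\tilde g=f_h^K$, rearranging, and taking the supremum over the ball gives
\begin{equation*}
\begin{aligned}
\norm{f_\theta-f_h^K}_{\mathcal{B}(x,r_1)} \le{}& \frac{1}{Sh}\norm{(\Phi_{h,f_\theta})^S-(\Phi_{h,f_h^K})^S}_{\mathcal{B}(x,r_1)}\\
&+ C''\,T\,\norm{f_\theta-f_h^K}_{\mathcal{B}(x,r_1)}.
\end{aligned}
\end{equation*}
Shrinking $T_0$ so that $C''T\le 1/e$ lets us absorb the last term through the geometric series $\sum_{j\ge0}(C''T)^j\le e/(e-1)$ --- this is precisely where the constant $\tfrac{e}{e-1}$ appears --- and inserting the stage-(ii) bound on the flow difference gives $\norm{f_\theta-f_h^K}_{\mathcal{B}(x,r_1)}\le c_1 m\, e^{-\gamma/h^{1/q}}+\tfrac{e}{e-1}\mathcal{L}$ (folding the $\tfrac{e}{e-1}$ and $C'$ factors into $c_1$). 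Evaluating at $x\in\R^D$ finishes the proof.
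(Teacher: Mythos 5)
Your overall architecture matches the paper's: a quantitative truncation estimate for the IMDE with an optimally chosen $K(h)$ (the paper's \cref{lem:difference}), a triangle inequality against $\mathcal{L}$, and an ``inversion'' of the $S$-step map to pass from closeness of flows back to closeness of vector fields. Stages (i) and (ii) are sound; for stage (ii) the paper instead regards $\brac{\Phi_h}^S$ as a single Runge--Kutta method with step $Sh$ and invokes \cref{lem:inmde} to identify its IMDE with that of $\Phi_h$, but your telescoping/discrete-Gronwall route reaches an equivalent bound.

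The gap is in stage (iii). Your inversion rests on the claim that the remainder $R_g$ in $\brac{\Phi_{h,g}}^S(y)=y+Sh\,g(y)+(Sh)^2R_g(y)$ satisfies $\norm{R_g-R_{\tilde g}}\le C''\norm{g-\tilde g}$ \emph{on the same ball}. This is not available for merely analytic vector fields: $R_g$ contains $g'g$ (and evaluations of $g$ at internal stages displaced by $O(hm)$), so any bound on $R_g-R_{\tilde g}$ over $\mathcal{B}(x,r_1)$ must pay with the norm of $g-\tilde g$ over an enlarged ball --- this is precisely the third item of \cref{asm:int} and \cref{lem:rk}, where the remainder is measured on $\mathcal{B}(\mathcal{K},b_3h_1m)$ with $h_1=O(h)$. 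Because the left- and right-hand norms then live on different sets, you cannot rearrange and absorb via $\sum_{j\ge 0}(C''T)^j$; and if you instead crudely bound the enlarged-ball term by the a priori bound $O(m)$, the first error term degrades from sub-exponentially small to $O(T)$, which is strictly weaker than the theorem. The paper's fix is to iterate the inversion inequality across roughly $r_1/(h_1b_3M)$ nested balls with contraction ratio $\lambda=b_2h/(h_1-h)=e^{-1}$ (the choice $h_1=(eb_2+1)h$ is where $\tfrac{e}{e-1}=\tfrac{1}{1-\lambda}$ actually comes from), leaving a final term $e^{-\hat\gamma/h}\norm{f_\theta-f_h^K}_{\mathcal{B}(x,r_1)}$ that is controlled by the third bound of \cref{lem:difference} (namely $\norm{f_h^K}\le(\eta-1)m$, together with $\norm{f_\theta}\le m$) and folded into $c_1$. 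You land on the correct constant, but the geometric series that produces it must run over nested domains, not over a fixed ball.
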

\begin{proof}
The proof can be found in \cref{sec:proofs}.
\end{proof}
Here, the first term, $c_1 m e^{-\gamma/h^{1/q}}$, is sub exponentially small, i.e., for any $k>0$, there exists a constant $c$ such that $c_1 m e^{-\gamma/h^{1/q}}<c h^k$. In statistical learning theory, learning error or expected risk typically refers to $\int l(f_{net}(x), z)dP(x,z)$. In this paper, it is of the form $\int \|\Phi_{h,f_{\theta}}(x)-\phi_{h,f}(x)\|_2^2dP(x)$,  which is the square of $L_2$-norm of $\Phi_{h,f_{\theta}}-\phi_{h,f}$. The $\mathcal{L}$ defined here, i.e., the second term, is the $L_{\infty}$-norm of $\Phi_{h,f_{\theta}}-\phi_{h,f}$ and thus measures the learning loss in the sense of generalization. If the learning loss converges to zero, the difference between the learned ODE and the truncated IMDE converges to near-zero. Thus we claim that the trained Neural ODE model is a close approximation of the IMDE.

\subsection{The Discrepancy between $f_{\theta}$ and $f$}
We have shown that training Neural ODE returns a close approximation of the corresponding IMDE instead of the true ODE. Although the true solution is unknown in practice and the IMDE is also inaccessible, we can quantify the discrepancy between $f_{\theta}$ and $f$ via investigating $f_h$.
\begin{theorem}\label{thm:imdeerror}
Suppose that the integrator $\Phi_{h}(x)$ with discrete step $h$ is of order $p\geq 1$, more precisely,
\begin{equation*}
\Phi_{h,f}(x)=\phi_{h,f}(x)+h^{p+1}\delta_f(x)+\mathcal{O}(h^{p+2}),
\end{equation*}
where $h^{p+1}\delta_f(x)$ is the leading term of the local truncation applied to (\ref{eq:ODE}). Then, the IMDE obeys
\begin{equation*}
\frac{d}{dt}\tilde{y}=f_h(\tilde{y})=f(\tilde{y})+h^pf_{p}(\tilde{y})+\cdots,
\end{equation*}
where $f_{p}(y)=-\delta_f(y)$, i.e., $f_h = f+ \mathcal{O}(h^p)$.

Furthermore, under the notations and conditions of \cref{thm:imde}, there exists a constant $c_2$ that depends on $r_1/m$, $r_2/m$, $S$ and $\Phi_h$, such that,
\begin{equation*}
\norm{f_{\theta}(x) - f(x)}\leq c_2mh^p + \frac{e}{e-1}\mathcal{L}.
\end{equation*}
\end{theorem}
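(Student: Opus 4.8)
The plan is to establish the two assertions of \cref{thm:imdeerror} separately. The first is a statement about formal power series in $h$; the second then follows by combining it with \cref{thm:imde} and a triangle inequality.

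\emph{Structure of the IMDE.} Write $f_h = f_0 + hf_1 + h^2 f_2 + \cdots$; by the consistency of $\Phi_h$ and the construction in \cref{sec:imde} one already has $f_0 = f$. I would rearrange the defining relation $\Phi_{h,f_h}(x) = \phi_{h,f}(x)$ into
\begin{equation*}
\phi_{h,f}(x) - \phi_{h,f_h}(x) = \Phi_{h,f_h}(x) - \phi_{h,f_h}(x).
\end{equation*}
For the right-hand side, the order-$p$ hypothesis is an identity of formal series $\Phi_{h,g}(x) - \phi_{h,g}(x) = h^{p+1}\delta_g(x) + \mathcal{O}(h^{p+2})$ in which $\delta_g$ is a differential polynomial in $g$; substituting $g = f_h = f + \mathcal{O}(h)$ gives $\delta_{f_h} = \delta_f + \mathcal{O}(h)$, so the right-hand side equals $h^{p+1}\delta_f(x) + \mathcal{O}(h^{p+2})$ and in particular has vanishing coefficients at every power $h^2,\dots,h^p$. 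For the left-hand side I would expand both flow maps using \cref{eq:exasolu}, so that $\phi_{h,f_h}(x) - \phi_{h,f}(x) = h(f_h(x)-f(x)) + \tfrac{h^2}{2}(f_h'f_h - f'f)(x) + \cdots$ and the coefficient of $h^{j}$ equals $f_{j-1}(x)$ plus a differential polynomial in $f,f_1,\dots,f_{j-2}$ only. Matching coefficients order by order, an induction on $j=2,\dots,p$ forces $f_1 = \cdots = f_{p-1} = 0$, and then matching the coefficient of $h^{p+1}$ yields $-f_p = \delta_f$. This establishes $f_h = f + h^p f_p + \cdots$ with $f_p = -\delta_f$, hence $f_h = f + \mathcal{O}(h^p)$.

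\emph{Error bound.} By the previous step, $f_h^K(x) - f(x) = \sum_{k=p}^{K} h^k f_k(x)$. Using Cauchy estimates for the analytic field $f$ on $\mathcal{B}(x,r_1+r_2)$ one bounds the leading term, $\norm{h^p f_p(x)} = h^p\norm{\delta_f(x)} \le c\,mh^p$; the coefficient estimates on the remaining $f_k$ and the optimal truncation index $K = K(h)$ produced in the proof of \cref{thm:imde} (see \cref{sec:proofs}) control the tail $\sum_{k=p+1}^{K} h^k f_k(x)$ by $\mathcal{O}(mh^{p+1})$. Hence $\norm{f_h^K(x) - f(x)} \le c'\,mh^p$ with $c'$ depending only on $r_1/m$, $r_2/m$, $S$, $\Phi_h$. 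Since $e^{-\gamma/h^{1/q}} \le C h^p$ uniformly for $h \in (0, T_0/S]$, the triangle inequality $\norm{f_{\theta}(x)-f(x)} \le \norm{f_{\theta}(x)-f_h^K(x)} + \norm{f_h^K(x)-f(x)}$ together with \cref{thm:imde} gives $\norm{f_{\theta}(x)-f(x)} \le (c_1 C + c')\,mh^p + \tfrac{e}{e-1}\mathcal{L}$, which is the claim with $c_2 = c_1 C + c'$.

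I expect the delicate point to be the first step: one must carry out the perturbation expansion of $\phi_{h,f_h}-\phi_{h,f}$ and the expansion $\delta_{f_h} = \delta_f + \mathcal{O}(h)$ cleanly at the level of formal power series, and verify that cross-contributions between the $\mathcal{O}(h)$ perturbation of the vector field and the $\mathcal{O}(h^{p+1})$ defect of $\Phi_h$ never reach the coefficient of $h^{p+1}$ — they do not, since once $f_1 = \dots = f_{p-1} = 0$ has been established the perturbation of the field is already $\mathcal{O}(h^p)$ and any correction to it is $\mathcal{O}(h^{p+2})$. The second step is then routine given the analytic machinery already assembled for \cref{thm:imde}; the only thing worth noting is that the sub-exponentially small term of \cref{thm:imde} is absorbed into $c_2 m h^p$.
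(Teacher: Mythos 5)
Your proposal is correct and follows essentially the same route as the paper: the first part is the coefficient-matching induction of \cref{lem:modiode} (the paper runs it through the recursion $h^{k+1}f_k = \phi_{h,f} - \Phi_{h,f_h^{k-1}} + \mathcal{O}(h^{k+2})$, which only ever needs the order-$p$ expansion for $f$ itself because $f_h^{k-1}=f$ below order $p$, whereas your version routes through $\phi_{h,f_h}$ and therefore additionally uses that the local-error expansion $\Phi_{h,g}-\phi_{h,g}=h^{p+1}\delta_g+\mathcal{O}(h^{p+2})$ holds with a universal $\delta_g$ for the perturbed field $g=f_h$ --- standard for an order-$p$ Runge--Kutta method, but worth stating explicitly since the hypothesis is phrased only for $f$). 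The second part is exactly the paper's argument: triangle inequality, \cref{thm:imde}, the bound $\norm{f_h^K - f}\leq c'mh^p$ (the second inequality of \cref{lem:difference}), and absorption of the sub-exponentially small term into $c_2mh^p$.
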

\begin{proof}
The proof can be found in \cref{sec:proofs}.
\end{proof}

\begin{figure}[ht]
    \begin{center}
    \centerline{\includegraphics[width=\linewidth]{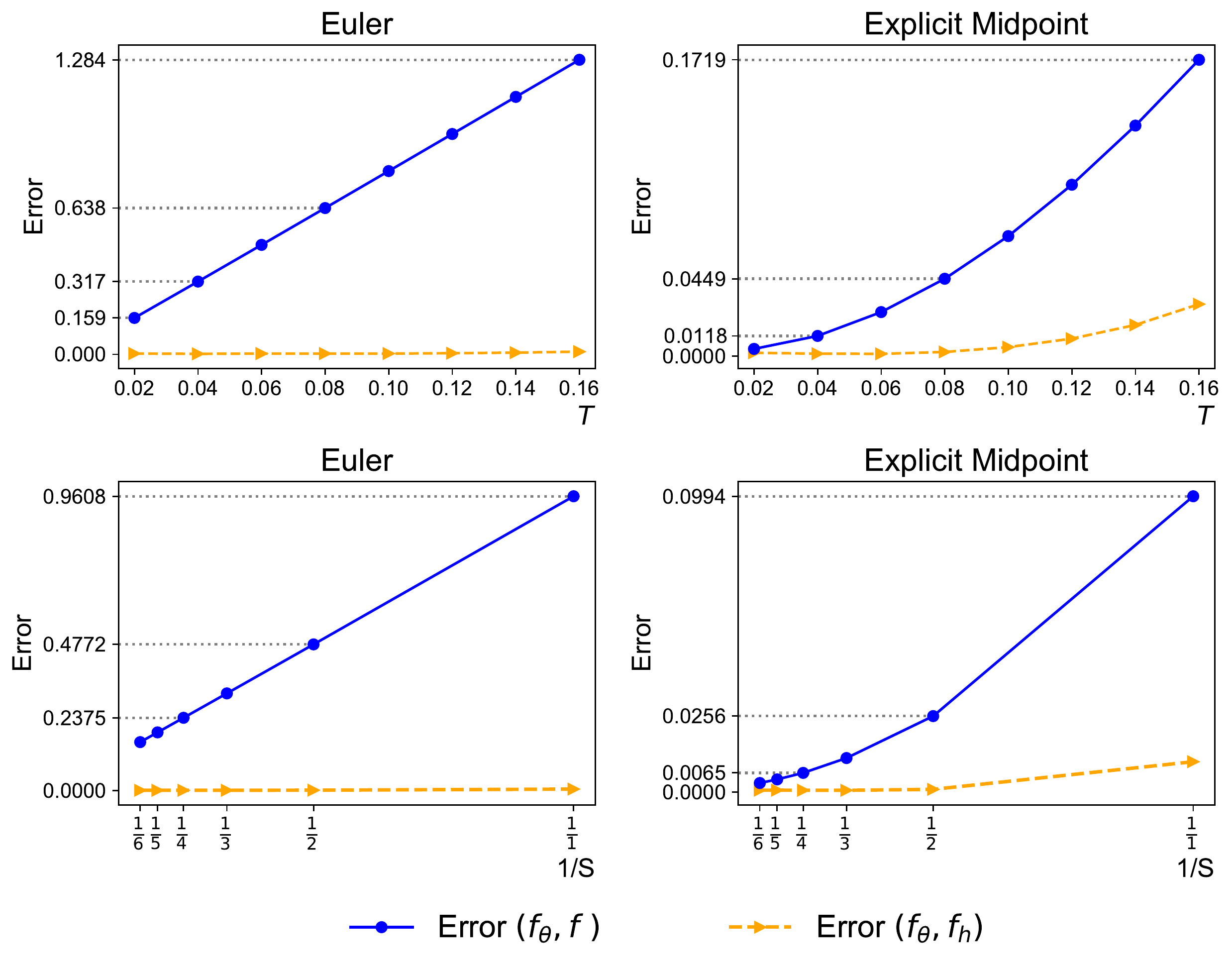}}
    \vskip -0.15in
    \caption{Error versus $h$ for learning the pendulum system. Here, the investigated numerical integrators are the Euler method (of order 1) and the explicit midpoint rule (of order 2). On top row, composition number $S$ is fixed to 1 thus $h=T$. On bottom row, data step $T$ is fixed to $0.12$ thus $h=0.12/S$. The error between $f$ and trained $f_{\theta}$ with respect to $h$ increase linearly for the Euler method and superlinearly for the explicit midpoint rule. More precisely, the error order is $1.01$ for the Euler method and $1.97$ for the explicit midpoint rule. Experimental details are presented in \cref{sec:Experimental Details}.}
    \label{fig:error}
    \end{center}
\vskip -0.2in
\end{figure}
For completeness, \cref{thm:imdeerror} was experimentally verified in \cref{fig:error}, where the error orders are consistent with the theoretical analysis.

As a direct consequence of the well-known Gr{\"o}nwall's Inequality \cite{howard1998gronwall}, we have the following corollary to provide an error bound between trajectories.
\begin{corollary}\label{cor:trajerror}
Under notations and conditions of \cref{thm:imde}, let $V_t = \{\phi_{\tau, f}(x)|0\leq \tau \leq t\}$ and 
\begin{equation*}
\mathcal{L}_t = \norm{\brac{\Phi_{h,f_{\theta}}}^S -\phi_{Sh,f}}_{\mathcal{B}(V_t, r_1)}/(Sh).    
\end{equation*} 
Then, there exist constants $C_1$, $C_2$ such that
\begin{equation*}
\norm{\phi_{t,f} (x) - \phi_{t,f_{\theta}}(x)}\leq  \frac{C_2e^{C_1t}-C_2}{C_1}(h^p+ \mathcal{L}_{t}).
\end{equation*}
\end{corollary}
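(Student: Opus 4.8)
The plan is to bound the distance between the two trajectories $\phi_{t,f}(x)$ and $\phi_{t,f_\theta}(x)$ by a standard Gr\"onwall argument, feeding in the pointwise vector-field estimate from \cref{thm:imdeerror} along the reference trajectory. First I would set $e(t) = \phi_{t,f}(x) - \phi_{t,f_\theta}(x)$, so that $e(0)=0$ and, by the integral form of the flow,
\begin{equation*}
e(t) = \int_0^t \big( f(\phi_{\tau,f}(x)) - f_\theta(\phi_{\tau,f_\theta}(x)) \big)\, d\tau .
\end{equation*}
The next step is to split the integrand by adding and subtracting $f_\theta(\phi_{\tau,f}(x))$:
\begin{equation*}
f(\phi_{\tau,f}(x)) - f_\theta(\phi_{\tau,f_\theta}(x)) = \underbrace{\big(f - f_\theta\big)(\phi_{\tau,f}(x))}_{\text{consistency-type term}} + \underbrace{f_\theta(\phi_{\tau,f}(x)) - f_\theta(\phi_{\tau,f_\theta}(x))}_{\text{stability term}} .
\end{equation*}
For the first term I would invoke \cref{thm:imdeerror} at the point $\phi_{\tau,f}(x)\in V_t$, obtaining the bound $c_2 m h^p + \frac{e}{e-1}\mathcal{L}_\tau \le c_2 m h^p + \frac{e}{e-1}\mathcal{L}_t$; note this is exactly why $\mathcal{L}_t$ is defined as a supremum over $\mathcal{B}(V_t, r_1)$ rather than over a single ball — we need the vector-field estimate to hold uniformly along the whole orbit up to time $t$. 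For the second term I would use that $f_\theta$ is analytic and bounded by $m$ on $\mathcal{B}(x, r_1+r_2)$, hence Lipschitz on the relevant region with some constant $L$ (obtainable from a Cauchy estimate on the derivative, e.g. $L \le m/r_2$), giving $\|f_\theta(\phi_{\tau,f}(x)) - f_\theta(\phi_{\tau,f_\theta}(x))\| \le L\|e(\tau)\|$.

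Assembling these, I get the scalar integral inequality
\begin{equation*}
\|e(t)\| \le \int_0^t \Big( C_2 (h^p + \mathcal{L}_t) + C_1 \|e(\tau)\| \Big)\, d\tau ,
\end{equation*}
with $C_1 = L$ and $C_2 = \max\{c_2 m,\ \tfrac{e}{e-1}\}$ (any such choice absorbing both constants works). Applying Gr\"onwall's inequality \cite{howard1998gronwall} in its standard form — if $u(t) \le a + \int_0^t b\, u(\tau)d\tau$ then $u(t)\le a\, e^{bt}$, or more directly the version $u(t)\le \alpha(t) + \int_0^t \beta u$ with constant $\alpha$ giving $u(t)\le \frac{\alpha}{\beta}(e^{\beta t}-1)$ when integrated against the forcing — yields
\begin{equation*}
\|e(t)\| \le \frac{C_2 e^{C_1 t} - C_2}{C_1}\,(h^p + \mathcal{L}_t),
\end{equation*}
which is the claimed estimate.

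The argument is essentially routine, so there is no deep obstacle; the one point requiring a little care is the \emph{domain bookkeeping}. I must ensure that the perturbed trajectory $\phi_{\tau,f_\theta}(x)$ actually stays inside the region $\mathcal{B}(V_t, r_1)$ on which the hypotheses of \cref{thm:imdeerror} and the analyticity/boundedness of $f_\theta$ were posited, so that the Lipschitz bound and the vector-field estimate are legitimate at every $\tau \le t$. This is handled by a continuity (bootstrap) argument: for small $h$ and $\mathcal{L}_t$ the right-hand side above is small, so $\|e(\tau)\| < r_1$ holds on a maximal subinterval, and the standard open–closed argument shows it persists to all of $[0,t]$; alternatively one restricts attention to the regime where the stated bound is itself below $r_1$, which is the only regime in which the conclusion is informative. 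The constants $C_1, C_2$ depend only on $m, r_1, r_2$ (through the Lipschitz constant of $f_\theta$) and on the constant $c_2$ from \cref{thm:imdeerror}, hence ultimately on $r_1/m$, $r_2/m$, $S$ and $\Phi_h$, consistent with the dependencies already tracked in the preceding theorems.
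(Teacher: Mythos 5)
Your proposal is correct and follows essentially the same route as the paper: the paper packages your add-and-subtract decomposition (consistency term bounded along the true trajectory via \cref{thm:imdeerror}, stability term bounded by the Lipschitz constant of $f_{\theta}$) into its Gr\"onwall proposition (\cref{pro:gronwall}) and then applies it directly, whereas you unpack the same argument by hand. Your extra care about keeping the perturbed trajectory inside the domain where the hypotheses hold is a legitimate point the paper glosses over, but it does not change the substance of the argument.
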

\begin{proof}
The proof can be found in \cref{sec:trajerror}.
\end{proof}

According to \cref{thm:imdeerror}, if training using a coarse discretization (i.e., large $h^{p+1}\delta_f(y)$), then testing using another solver will suffer a numerical error of $\mathcal{O}(h^{p})$ besides the learning error. And only if training using sufficiently fine discretization, the trained model has the true ODE interpretation. It is worth mentioning that if Neural ODE are used for the discovery of latent dynamics, \cref{thm:imdeerror} provides an upper error bound.

\subsection{Learning Hamiltonian Systems}\label{sec:Learning Hamiltonian Systems}
Hamiltonian system is an important category in ordinary differential equations and there have been satisfactory works on learning Hamiltonian systems \cite{bertalan2019learning,chen2021data,greydanus2019hamiltonian, jin2020sympnets}. Greydanus et al. \yrcite{greydanus2019hamiltonian} observed drifting of the predicted trajectory when learning a Hamiltonian system using Neural ODE. This observation can be illuminated by IMDE.

A Hamiltonian system is formulated as
\begin{equation*}
\frac{d}{dt}y = J^{-1} \nabla H(y),\quad J=\begin{pmatrix} 0 & I \\ -I & 0\end{pmatrix},
\end{equation*}
where $I$ is $D/2$-by-$D/2$ identity matrix. As discussed above, there exists an IMDE such that formally
\begin{equation*}
\phi_{h,J^{-1} \nabla H}= \Phi_{h,f_h}.
\end{equation*}
And training Neural ODE returns an approximation of this IMDE. Therefore, learning Hamiltonian systems, or conservation laws, requires the IMDE to be a Hamiltonian system, i.e., $J f_h$ is a potential field. This is true only when the numerical integrator used in Neural ODE is symplectic.
\begin{lemma}\label{lem:imdeh}
Suppose the true system is a Hamiltonian system. If the employed numerical integrator $\Phi_h$ is symplectic, then its IMDE is also a Hamiltonian system, i.e., there locally exist smooth functions $H_k$, $k=0,1,2,\cdots$, such that
\begin{equation*}
f_k(y)=J^{-1}\nabla H_k(y).
\end{equation*}
If the employed numerical integrator $\Phi_h$ is not symplectic, then its IMDE is not a Hamiltonian system, i.e., there exists a constant $\hat{k}$ that only depends on $\Phi_h$ such that $J f_{\hat{k}}$ is not a potential field.
\end{lemma}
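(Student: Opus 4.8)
The plan is to leverage the classical backward-error-analysis result for modified differential equations and adapt it to the IMDE setting. For the symplectic case, recall the standard fact: if $\Phi_h$ is a symplectic integrator applied to a Hamiltonian vector field $f = J^{-1}\nabla H$, then the usual \emph{modified} equation $\tilde f_h = f + h\tilde f_1 + h^2\tilde f_2 + \cdots$ (the one satisfying $\Phi_{h,f} = \phi_{h,\tilde f_h}$ formally) is again Hamiltonian, with locally-defined modified Hamiltonians $\tilde H_k$. The IMDE is, up to sign conventions on the powers of $h$, the ``inverse'' of this construction: it is defined by $\Phi_{h,f_h} = \phi_{h,f}$, i.e. $f$ plays the role that $\Phi_{h,f}$'s true vector field plays in the forward theory, and $f_h$ plays the role of $\tilde f_h$. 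So first I would make this correspondence precise: apply the classical MDE theorem to the \emph{inverse} integrator $\Phi_h^{-1}$ (which is also symplectic, since symplecticity is preserved under inversion and $\Phi_h^{-1}$ is again a consistent one-step method of the same order), or equivalently observe that the recursion defining $f_k$ in \cref{sec:imde} is formally the same recursion as in backward error analysis with $h \mapsto -h$ on the integrator side. Either route gives that the symplecticity of $\Phi_h$ forces each $f_k$ to be locally Hamiltonian; I would spell this out by induction on $k$, showing that if $f_0 = J^{-1}\nabla H_0, \ldots, f_{k-1} = J^{-1}\nabla H_{k-1}$ then the defining equation for $f_k$ (comparison of the coefficient of $h^{k+1}$ in \cref{eq:exasolu} and \cref{eq:numsolu}) together with the symplecticity condition on the $d_{j,\cdot}$ terms implies $J f_k$ is closed, hence locally a gradient by the Poincaré lemma.

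For the non-symplectic case, the strategy is contrapositive via the forward direction: suppose, for contradiction, that $J f_k$ is a potential field for \emph{every} $k$, so the IMDE $f_h$ is (formally) Hamiltonian with Hamiltonian $H_h = H + hH_1 + h^2 H_2 + \cdots$. Then $\phi_{h, f_h}$ is a symplectic map (the flow of a Hamiltonian system is symplectic), formally in each order of $h$. But $\Phi_{h,f} = \Phi_{h, \phi_{h,f_h}\text{-data}}$... more precisely, by construction $\Phi_{h, f_h} = \phi_{h,f}$ formally, which is the \emph{wrong} identity for this argument; instead I would use the forward relation. The cleaner route: the IMDE construction is a formal bijection between vector fields, and its inverse sends $f_h \mapsto f$ via exactly the backward-error-analysis map for $\Phi_h$. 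That is, $f$ is the modified vector field of $\Phi_h$ applied to $f_h$: $\Phi_{h, f_h} = \phi_{h, f}$ says $\phi_{h,f}$ \emph{is} the formal flow whose time-$h$ map equals $\Phi_{h,f_h}$, i.e. $f = (\text{MDE of }\Phi_h\text{ at }f_h)$. Now invoke the well-known \emph{converse} of the symplectic MDE theorem: if $\Phi_h$ is not symplectic, there exists a vector field $g$ such that the modified equation of $\Phi_h$ at $g$ is not Hamiltonian; applying this with $g = f_h$ and using that the map $f \leftrightarrow f_h$ is a formal bijection, we get that $f_h$ cannot be Hamiltonian for a suitable choice, i.e. there is a lowest index $\hat k$ (depending only on $\Phi_h$, not on $f$, by the order-by-order nature of the obstruction) at which $J f_{\hat k}$ fails to be a potential field.

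The main obstacle I anticipate is pinning down the precise statement and proof of the ``converse'' for the non-symplectic case — in particular, arguing that the obstruction index $\hat k$ depends \emph{only} on $\Phi_h$ and not on the true Hamiltonian $H$. The classical literature (e.g. Hairer–Lubich–Wanner) emphasizes the positive direction (symplectic $\Rightarrow$ Hamiltonian MDE); the negative direction requires exhibiting, at the first order $\hat k+1$ where the non-symplecticity of $\Phi_h$ manifests in its generating-function / modified-equation expansion, a Hamiltonian test field for which the coefficient $f_{\hat k}$ has nonvanishing curl. I would handle this by computing the $h^{\hat k}$-term of the IMDE explicitly in terms of the elementary differentials entering $\Phi_h$'s expansion, using the bi-coloured-tree / B-series machinery of \cref{sec:Expanding Numerical Solution}, and checking that the antisymmetric part of $D(J f_{\hat k})$ equals (a nonzero multiple of) the symplecticity-defect of $\Phi_h$ contracted against $\nabla^2 H$ and $\nabla H$ — which is nonzero for generic, hence some, $H$, and since the statement only claims existence of a bad index for the \emph{given} true system when it is ``generic enough'', or more carefully, one fixes $\hat k$ from $\Phi_h$ alone and notes the defect is a nonzero differential operator applied to $H$, so it cannot vanish identically unless $\Phi_h$ were symplectic. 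A secondary, more bookkeeping-level obstacle is the sign/index bookkeeping in translating between the MDE convention ($\Phi_{h,f} = \phi_{h,\tilde f_h}$) and the IMDE convention ($\phi_{h,f} = \Phi_{h,f_h}$); I would resolve this once at the start via the substitution $h \mapsto -h$ together with reversing the roles of solver and flow, and then quote the MDE results as black boxes.
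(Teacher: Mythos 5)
Your overall skeleton (induction on $k$ plus the Poincar\'e lemma for the symplectic half; a ``first obstruction order'' $\hat k$ for the non-symplectic half) matches the paper's, but the step that actually closes the induction is missing, and the reduction you lean on in its place does not work. The classical MDE theorem says that a symplectic $\Phi_h$ applied to a Hamiltonian field produces a Hamiltonian modified field; in the IMDE convention $\Phi_{h,f_h}=\phi_{h,f}$ the roles are reversed --- $f$ is the modified field of $\Phi_h$ at $f_h$ --- so what you need is the \emph{converse} implication (Hamiltonian output $\Rightarrow$ Hamiltonian input), which the black-box theorem does not give; you notice this yourself mid-argument but never resolve it, and ``the symplecticity condition on the $d_{j,\cdot}$ terms'' is not a usable substitute, since symplecticity is a property of the map $\Phi_{h,g}$ as a whole, not of its individual Taylor coefficients. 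The mechanism the paper uses is concrete and is what you should supply: from the defining recursion, $\phi_{h,f}(x)=\Phi_{h,f_h^{K}}(x)+h^{K+2}f_{K+1}(x)+\mathcal{O}(h^{K+3})$; differentiate in $x$, note that $\phi_{h,f}$ is symplectic (exact Hamiltonian flow) and that $\Phi_{h,f_h^{K}}$ is symplectic (a symplectic method applied to $f_h^{K}$, which is Hamiltonian by the induction hypothesis), and use $\Phi_{h,f_h^{K}}'(x)=I+\mathcal{O}(h)$ to read off $f_{K+1}'(x)^{T}J+Jf_{K+1}'(x)=0$ from the $h^{K+2}$ coefficient of $(\cdot)'^{T}J(\cdot)'$. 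Symmetry of $Jf_{K+1}'$ plus the Poincar\'e lemma then yields $H_{K+1}$ locally.

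For the non-symplectic half, your plan (explicit B-series computation of the antisymmetric part of $D(Jf_{\hat k})$ plus a genericity argument in $H$) is far heavier than necessary and is left unfinished at exactly the point you flag. The same identity does the job: let $\hat k$ be the order to which $\Phi_h$ preserves the symplectic form on Hamiltonian systems, $\Phi_h'(x)^{T}J\Phi_h'(x)=J+\mathcal{O}(h^{\hat k+1})$; the induction above still runs for $k\le\hat k-1$, and at order $h^{\hat k+1}$ the exact identity $J=\phi_{h,f}'^{T}J\phi_{h,f}'$ forces $f_{\hat k}'^{T}J+Jf_{\hat k}'$ to cancel the nonzero leading symplecticity defect of $\Phi_{h,f_h^{\hat k-1}}$, so $Jf_{\hat k}'$ is not symmetric and $Jf_{\hat k}$ is not a gradient. (Your worry about whether that defect might vanish for the particular $H$ at hand is legitimate --- the paper asserts rather than verifies the nonvanishing for the given system --- but that caveat applies equally to the paper's proof and does not repair the missing mechanism in your write-up.)
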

\begin{proof}
See \cref{sec:imdeh} for the basic concepts of symplectic integration and the detailed proof of this lemma.
\end{proof}
\begin{figure}[ht]
\vskip -0.1in
    \begin{center}
    \centerline{\includegraphics[width=\linewidth]{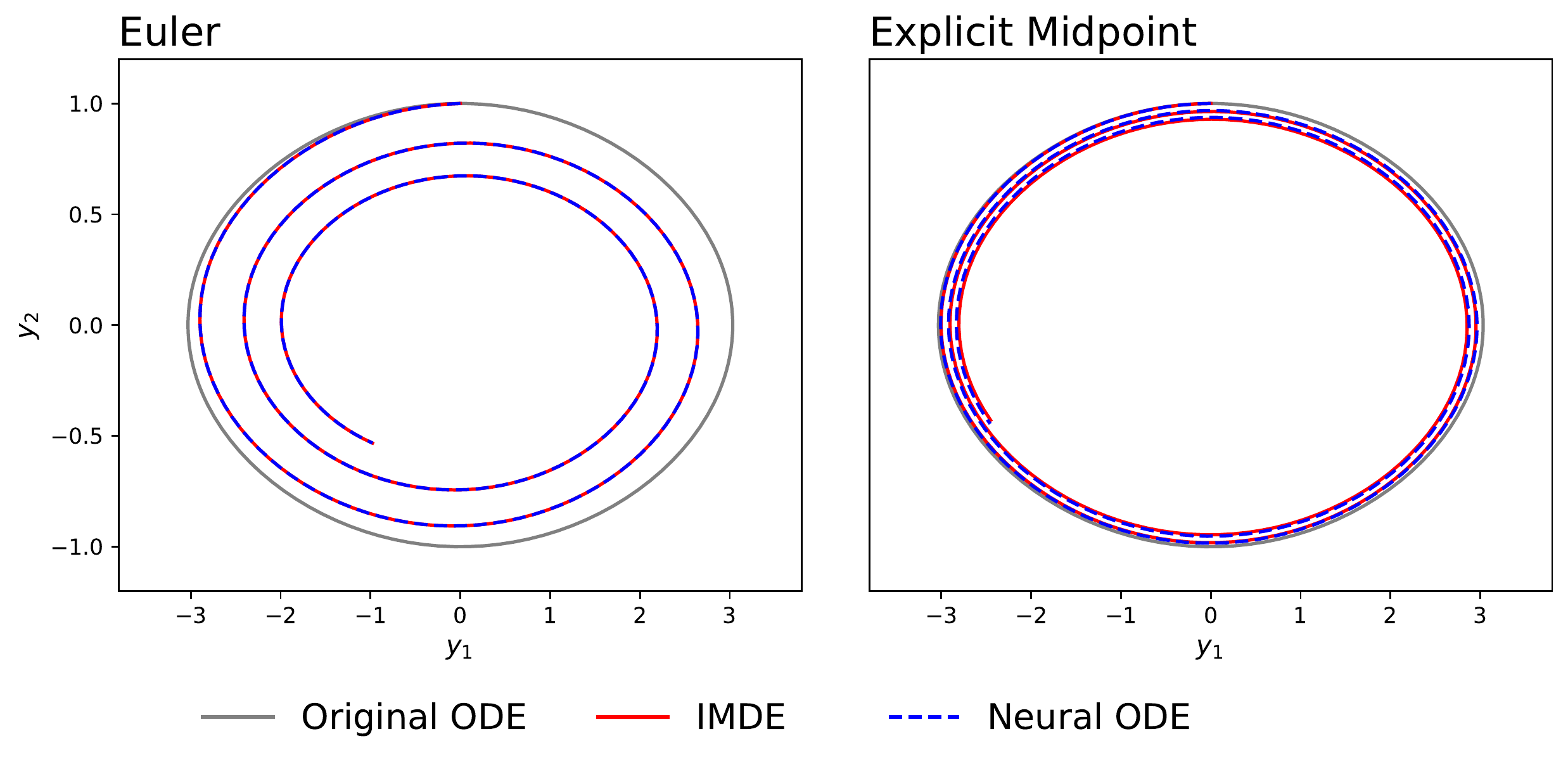}}
    \vskip -0.15in
    \caption{Learning pendulum system using Neural ODE. The dynamics of learned models gradually drift away from the ground truth and accurately match the IMDE. Experimental details are presented in \cref{sec:Experimental Details}.}
    \label{fig:hami}
    \end{center}
    \vskip -0.2in
\end{figure}
A non-symplectic numerical integrator cannot guarantee that its IMDE is always a Hamiltonian system. Thus Neural ODE using non-symplectic integration fail to learn conservation laws. \cref{fig:hami} validates this statement experimentally. We remark that any explicit Runge-Kutta method\footnote{A Runge-Kutta method (\ref{runge-kutta}) is explicit if $a_{ij}=0$ for $i\leq j$ and implicit otherwise. An implicit method has to be calculated iteratively, thus it is not employed in vanilla Neural ODE.} can not be symplectic (see e.g., Theorem \uppercase\expandafter{\romannumeral 6}.4.3 of \cite{hairer2006geometric}).

\subsection{Discussion on Conditions in \cref{thm:imde}}
The generalization requirement, i.e., using $\mathcal{L}$ as an error bound, is in some sense necessary. Otherwise, if the ODE solver is one composition of the implicit Euler method\footnote{\begin{equation*}
v_1 =x+hf(v_1),\quad \Phi_{h}(x) = x+hf(v_1).
\end{equation*}}, then, the learning model has no information at $x$. Neural network models have better generalization in practice, thus small $\mathcal{L}$ can be attained and it is reasonable to use $\mathcal{L}$ as a part of the error bound.

The analyticity and boundness requirements are the fundamental assumptions for the estimates in \cref{thm:imde}, which allow us to use complex analysis techniques to complete the proof. To illustrate their reasonableness, we consider the following two learning tasks.

\begin{example}\label{example1}
Consider learning the differential equation
\begin{equation*}
\begin{aligned}
&\frac{d}{dt}p=1, \\
&\frac{d}{dt}q=\sin{(\frac{2\pi}{h}p+b)},
\end{aligned}
\end{equation*}
with parameter $b$ and initial value $(p(0),q(0))=(p_0,q_0)$. The exact solution is given as
\begin{equation*}
\begin{aligned}
&p(t)=p_0+t, \\
&q(t)=q_0-\frac{h}{2\pi}(\cos{(\frac{2\pi}{h}(p_0+t)+b)}-\cos{(\frac{2\pi}{h}p_0+b)}).
\end{aligned}
\end{equation*}
Taking $t=h$, we have that
\begin{equation*}
\begin{aligned}
&p(h)=p_0+h, \\
&q(h)=q_0.
\end{aligned}
\end{equation*}
Thus, same exact solutions are obtained although the parameter $b$ is different, which yields multiple ODE interpretation of training data.
\end{example}

\begin{example}\label{example2}
Consider learning linear equation
\begin{equation*}
\frac{d}{dt} p = \lambda p
\end{equation*}
with parameter $\lambda$. Applying the explicit Euler method twice yields
\begin{equation*}
(\Phi_h)^2(p_0) = (1+\lambda h)^2 p_0 = (1+ (-2/h-\lambda)h)^2p_0.
\end{equation*}
Same numerical solutions are obtained for parameters $\lambda$ and $(-2/h-\lambda)$, indicating that $f_{\theta}$ can approach different targets to minimize loss.
\end{example}

The analyticity and boundness requirements indicate the boundness of derivatives of $f$ and $f_{\theta}$ due to Cauchy's estimate in several variables (see e.g., Section 1.3 of \cite{scheidemann2005introduction}), more precisely,
\begin{equation*}
\|D^{\bm{\alpha}} f\|_{\mathcal{B}(x,r_1)},\ \|D^{\bm{\alpha}}f_{\theta}\|_{\mathcal{B}(x,r_1)} \leq \bm{\alpha}!m r_2^{-|\bm{\alpha}|}.
\end{equation*}
The requirement for the true ODE excludes counterexamples similar to those in \cref{example1} and implies that our results only hold for low-frequency true ODE interpretation. In addition, the assumption for the learned ODE excludes high-frequency neural network solutions similar to those in \cref{example2}. When training Neural ODE, we can enforce the weights for each layer of $f_{\theta}$ to attain low-frequency assumption of the learned ODE. In classical regression problems, training FNN first captures low-frequency components of the target function and then approximates the high-frequency \cite{luo2019theory,xu2019training}. We conjecture that the implicit regularization is also applied to Neural ODE, and thus the analyticity and boundness assumption of $f_{\theta}$ holds without any explicit regularization.

Experimental results in \cref{fig:tras} validate both assumptions, where Neural ODE capture the evolutions of the IMDE without regularization. It is worth mentioning that the assumptions of generalization and analyticity are also required in the global error analysis of GFNN \cite{chen2021data}, and their conclusion is also confirmed experimentally.

The requirement of the Runge-Kutta method is not necessary. \cref{thm:imde} holds for any numerical integrator satisfying the following assumption.
\begin{assumption}\label{asm:int}
For analytic $g$, $\hat{g}$ satisfying $\norm{g}_{\mathcal{B}(\mathcal{K},r)}\leq m$, $\norm{\hat{g}}_{\mathcal{B}(\mathcal{K},r)} \leq m$, there exist constants $b_1,b_2,b_3$ that depend only on the $\Phi_{h}$ and $S$ such that
\begin{itemize}
    \item For $|h|\leq h_0 = b_{1} r/m$ , $\brac{\Phi_{h,\hat{g}}}^S$, $\brac{\Phi_{h,g}}^S$ are analytic on $\mathcal{K}$.
    \item for $|h|\leq h_0$,
    \begin{equation*}
        \norm{\brac{\Phi_{h,\hat{g}}}^S-\brac{\Phi_{h,g}}^S}_{\mathcal{K}}\leq b_{2}h\norm{\hat{g}-g}_{\mathcal{B}(\mathcal{K},r)}.
    \end{equation*}
    \item for $|h|< h_1 < h_0$,
    \begin{equation*}
    \begin{aligned}
    \norm{\hat{g}-g}_{\mathcal{K}} \leq & \frac{1}{S|h|}\norm{\brac{\Phi_{h,\hat{g}}}^S-\brac{\Phi_{h,g}}^S}_{\mathcal{K}} \\
    & + \frac{b_2|h|}{h_1-|h|} \norm{\hat{g}-g}_{\mathcal{B}(\mathcal{K},b_3h_1 m)}.
    \end{aligned}
    \end{equation*}
\end{itemize}
\end{assumption}
We will show that Runge-Kutta methods satisfy \cref{asm:int} in \cref{sec:Properties of Runge-Kutta methods}.

\section{Conclusion and Limitations} \label{sec:conclusion and Limitations}
In this paper, we perform numerical analysis on the numerical integration in Neural ODE. The main contribution is that we propose the inverse modified differential equations (IMDE) and prove that training a Neural ODE model actually returns an approximation of the IMDE determined by the learning task and the employed ODE solver, rather than the true ODE. This theorem clarifies the influence of the numerical integration on training Neural ODE models by pointing out the change of approximation target. In addition, we show that the discrepancy between the trained model and the unknown system is bounded by the sum of discretization error $Ch^p$ and learning loss, where $h$ is the discrete step and $p$ is the order of integrator. It provides an upper error bound for the discovery of hidden dynamics using Neural ODE. We also discuss learning the Hamiltonian system. IMDE reveal the potential problems, showing that Neural ODE using non-symplectic integration fail to learn conservation laws. Experimental results support the theoretical analysis.

One limitation of our work is the generalization and boundness requirements on complex space. Quantifying the generalization error and implicit regularization for supervised learning are still open research problems. In addition, the current IMDE is not applicable to adaptive step size selection. We would like to further investigate such problems for Neural ODE in the future.  

If the discrete Neural ODE is employed as a discrete model, our assumption that there exists a valid ODE solution does not hold since Neural ODE models are limited in their approximation capabilities \cite{zhang2020approximation}. In this case, the influence of the numerical integration, together with how to measure this influence, remains unknown.

Like modified differential equations, IMDE only introduce a framework of theoretical analysis of the numerical integration in Neural ODE. Based on the analysis results, improving Neural ODE from the point of view of the ODE solver might be another interesting direction.

\section*{Acknowledgements}
The authors thank the anonymous reviewers for their valuable comments. This work is supported by the Major Project on New Generation of Artificial Intelligence from MOST of China (Grant No. 2018AAA0101002), and National Natural Science Foundation of China (Grant Nos. 11901564 and 12171466).
%
%


\bibliography{main}
\bibliographystyle{icml2022}

\newpage
\appendix
\onecolumn
\section{Calculation of IMDE}\label{sec:Calculation of IMDE}
\subsection{Expanding Exact Solution by Lie Derivatives}\label{sec:Expanding Exact Solution}
Following \cite{hairer2006geometric}, we briefly introduce Lie derivatives. Given ordinary differential equations
\begin{equation*}
\frac{d}{dt}y(t) = f(y(t)),
\end{equation*}
Lie derivative $\D$ is the differential operator defined as:
\begin{equation*}
\D g(y)=g'(y)f(y), \quad g:\R^D \rightarrow \R^D.
\end{equation*}
According to the chain rule, we have
\begin{equation*}
\frac{d}{dt}g(\phi_{t,f}(x))=(\D g)(\phi_{t,f}(x)),
\end{equation*}
and thus obtain the Taylor series of $g(\phi_{t,f}(x))$ developed at $t=0$:
\begin{equation*}
g(\phi_{t,f}(x))=\sum_{k=0}^{\infty}\frac{t^k}{k!}(\D^kg)(x).
\end{equation*}
In particular, by setting $t=h$ and $g(y)=I_D(y)=y$, the identity map, it turns to the Taylor series of the exact solution $\phi_{h,f}$ itself, i.e.,
\begin{equation}\label{eq:exasolu-a}
\begin{aligned}
\phi_{h,f}(x)=&\sum_{k=0}^{\infty}\frac{h^k}{k!}(\D^kI_D)(x)=x+hf(x)+\frac{h^2}{2}f'f(x)+\frac{h^3}{6}(f''(f,f)(x)+f'f'f(x))+\cdots .
\end{aligned}
\end{equation}
Here, the notation $f'(x)$ is a linear map (the Jacobian), the second order derivative $f''(x)$ is a symmetric bilinear map and similarly for higher order derivatives described as tensor, more precisely, for $k$ vector fields $g^1, \cdots, g^k:\R^D \rightarrow \R^D$,
\begin{equation*}
f^{(k)}(g^1, \cdots, g^k) = \left(\sum_{i_1,\cdots, i_k=1}^D\frac{\partial ^k f_1}{\partial x_{i_1}\cdots \partial x_{i_k}}g^1_{i_1}\cdots g^k_{i_k},\ \cdots, \sum_{i_1,\cdots, i_k=1}^D\frac{\partial ^k f_D}{\partial x_{i_1}\cdots \partial x_{i_k}}g^1_{i_1}\cdots g^k_{i_k}\right)^{\top},
\end{equation*}
where the subscript $i$ indicates the $i$-th component.

\subsection{Expanding Runge-Kutta methods}\label{sec:Expanding Numerical Solution}
The expansion of numerical solutions has been well-developed in the last few decades \cite{hairer2006geometric}. Here, we briefly introduce the expansion of Runge-Kutta methods. Given real numbers $b_i,a_{ij}\ (i,j=1,\cdots, s)$, a Runge-Kutta method for solving $\frac{d}{dt}y(t) = g(y(t))$ with initial value $x$ is defined as
\begin{equation}\label{runge-kutta-a}
\begin{aligned}
&v_i =x+h\sum_{j=1}^Ia_{ij}g(v_j)\quad i=1,\cdots ,I,\\
&\Phi_{h}(x) = x+h\sum_{i=1}^Ib_ig(v_i).
\end{aligned}
\end{equation}
The coefficients $a_{ij}$, $b_i$ with $i,j=1,\cdots ,I$ fully characterize the method and also the expansion.
\begin{lemma}\label{the:RKexp}
The derivatives of the solution of a Runge-Kutta method (\ref{runge-kutta-a}) with respect to $x$, for $h=0$, are given by
\begin{equation}\label{eq:tree}
\frac{\partial^k \Phi_{h}(x)}{\partial h^k}\Big|_{h=0} =\sum_{|\tau|=k}\gamma(\tau) \cdot \alpha(\tau) \cdot \phi(\tau) \cdot F(\tau)(x).
\end{equation}
Here, $\tau$ is called trees and $|\tau|$ is the order of $\tau$ (the number of vertices). $\gamma(\tau)$, $\phi(\tau)$, $\alpha(\tau)$ are positive integer coefficients, $F(\tau)(y)$ is called elementary differentials and typically composed of $g(y)$ and its derivatives.
\end{lemma}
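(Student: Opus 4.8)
The statement is the classical rooted-tree (B-series) expansion of Runge--Kutta methods, and the plan is to prove it by induction on $k$, paralleling the Lie-derivative expansion of the exact solution in \cref{sec:Expanding Exact Solution}; a self-contained treatment is in \cite{hairer2006geometric}. First I would fix $x$ and view the internal stages $v_i = v_i(h)$ of (\ref{runge-kutta-a}) as functions of $h$. Since the stage equations reduce to $v_i = x$ at $h=0$ and $g$ is smooth, the implicit function theorem gives a unique branch $v_i(h)$ with $v_i(0)=x$ that is smooth near $h=0$, so $\Phi_h(x) = x + h\sum_i b_i g(v_i(h))$ is smooth there and the coefficients $\partial_h^k \Phi_h(x)\big|_{h=0}$ are well defined. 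Writing the Taylor expansions $v_i(h) = \sum_{k\ge 0}\frac{h^k}{k!}v_i^{[k]}$ with $v_i^{[0]} = x$, the task reduces to an explicit formula for $v_i^{[k]}$.

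The heart of the argument is to establish, by induction on $k$, the internal-stage identity $v_i^{[k]} = \sum_{|\tau|=k}\gamma(\tau)\alpha(\tau)\,g_i(\tau)\,F(\tau)(x)$, where the internal weights are defined recursively on rooted trees by $g_i(\bullet) = \sum_j a_{ij}$ and $g_i([\tau_1,\dots,\tau_m]) = \sum_j a_{ij}\prod_{\ell=1}^{m} g_j(\tau_\ell)$, and the elementary differentials by $F(\bullet)(y) = g(y)$ and $F([\tau_1,\dots,\tau_m])(y) = g^{(m)}(y)\bigl(F(\tau_1)(y),\dots,F(\tau_m)(y)\bigr)$. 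Differentiating the stage equation $v_i(h) = x + h\sum_j a_{ij}g(v_j(h))$ exactly $k$ times at $h=0$ gives $v_i^{[k]} = k\sum_j a_{ij}\,\partial_h^{\,k-1}\bigl[g(v_j(h))\bigr]_{h=0}$, and the multivariate Fa\`a di Bruno formula expands $\partial_h^{\,k-1}\bigl[g(v_j(h))\bigr]_{h=0}$ as a sum over $m\ge 0$ and $k_1+\dots+k_m = k-1$ of terms $g^{(m)}(x)\bigl(v_j^{[k_1]},\dots,v_j^{[k_m]}\bigr)$ with explicit multinomial coefficients. Substituting the inductive formula into each $v_j^{[k_\ell]}$ and re-assembling the $m$-tuples of subtrees $(\tau_1,\dots,\tau_m)$ into rooted trees $\tau = [\tau_1,\dots,\tau_m]$ of order $k$ reproduces the claimed shape, with internal weight exactly $g_i(\tau)$ and differential exactly $F(\tau)$.

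The hard part will be the combinatorial bookkeeping: I have to check that the coefficient accumulated for a tree $\tau = [\tau_1,\dots,\tau_m]$ --- namely $k$ times the Fa\`a di Bruno weight times $\prod_\ell \gamma(\tau_\ell)\alpha(\tau_\ell)$, summed over the orderings of the subtrees --- collapses to exactly $\gamma(\tau)\alpha(\tau)$, and that trees of order strictly less than $k$ contribute nothing to $\partial_h^k$ at $h=0$. This reduces to the recursion $\gamma([\tau_1,\dots,\tau_m]) = |\tau|\prod_\ell \gamma(\tau_\ell)$ together with the identity $\gamma(\tau)\alpha(\tau) = |\tau|!/\sigma(\tau)$ and the fact that repeated, unordered subtrees of a common isomorphism type generate precisely the symmetry factor $\sigma(\tau)$; equivalently, one verifies that $\beta(\tau) := |\tau|!/\sigma(\tau)$ obeys the matching recursion under this re-assembly. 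Once the identity is in place the induction closes, and one reads off that $\gamma(\tau)$ (the density) and $\alpha(\tau)$ (the number of monotonic labelings) are positive integers depending on $\tau$ alone.

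Finally I would note that the output $\Phi_h(x) = x + h\sum_i b_i g(v_i(h))$ has exactly the form of one more stage equation, but with $b_i$ in place of $a_{ij}$; running the last differentiation-and-substitution step once more replaces the internal weights $g_i(\tau)$ by the elementary weights $\phi(\bullet) = \sum_i b_i$ and $\phi([\tau_1,\dots,\tau_m]) = \sum_i b_i\prod_\ell g_i(\tau_\ell)$, and yields $\partial_h^k\Phi_h(x)\big|_{h=0} = \sum_{|\tau|=k}\gamma(\tau)\alpha(\tau)\phi(\tau)F(\tau)(x)$, which is the assertion; here $\phi(\tau)$ is the elementary weight of the method, a polynomial in the coefficients $a_{ij},b_i$, and $F(\tau)$ is a composition of $g$ and its derivatives.
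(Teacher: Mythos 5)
Your proposal is correct: it is the classical rooted-tree derivation of the Runge--Kutta Taylor expansion (stage-wise induction, Fa\`a di Bruno, and the combinatorial identity $\gamma(\tau)\alpha(\tau)\sigma(\tau)=|\tau|!$ to collapse the coefficients), which is exactly the argument in Section III.1 of Hairer, Lubich and Wanner that the paper's proof simply cites without reproducing. Since the paper offers no independent proof beyond that citation, your sketch is a faithful expansion of the same approach, and the only remaining work is the combinatorial bookkeeping you already identify.
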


\begin{table}[t]
    \caption{Trees, coefficients and elementary differentials in \cref{eq:tree}}
    \label{tab:RKtree}
    \vskip 0.15in
    \centering
    \begin{tabular}{|c|c|c|c|c|c|}
    \hline
       $|\tau|$& $\tau$             &$\gamma(\tau)$& $\alpha(\tau)$ & $\phi(\tau)$ & $F(\tau)$ \\
     \hline
     \hline
       1&$\bullet$                  &1 &1 & $\sum_ib_i$                        & $g$           \\
     \hline
       2&$[\bullet]$                &2 &1 & $\sum_{ij}b_ia_{ij} $              & $g'g$         \\
     \hline
       3&$[\bullet,\bullet]$        &3 &1 & $\sum_{ijk}b_ia_{ij}a_{ik}$        & $g''(g,g)$    \\

       3&$[[\bullet]]$              &6 &1 & $\sum_{ijk}b_ia_{ij}a_{jk}$        & $g'g'g$       \\
     \hline
       4&$[\bullet,\bullet,\bullet]$&4 &1 & $\sum_{ijkl}b_ia_{ij}a_{ik}a_{il}$ & $g'''(g,g,g)$ \\

       4&$[[\bullet],\bullet]$      &8 &3 & $\sum_{ijkl}b_ia_{ij}a_{ik}a_{jl}$ & $g''(g'g,g)$  \\

       4&$[[\bullet,\bullet ]]$     &12&1 & $\sum_{ijkl}b_ia_{ij}a_{jk}a_{jl}$ & $g'g''(g,g)$  \\

       4&$[[[\bullet]]]$            &24&1 & $\sum_{ijkl}b_ia_{ij}a_{jk}a_{kl}$ & $g'g'g'g$     \\
     \hline
    \end{tabular}
\vskip -0.1in
\end{table}
\begin{proof}
Some $\gamma(\tau), \alpha(\tau), \phi(\tau), F(\tau)$ are reported in \cref{tab:RKtree}, detailed proof and calculations can be found in Section \uppercase\expandafter{\romannumeral3}.1 of \cite{hairer2006geometric}.
\end{proof}
Due to \cref{the:RKexp}, the formal expansion of a Runge-Kutta method is given by
\begin{equation*}
\Phi_{h,g}(x)=y+hd_{1,g}(x)+h^2d_{2,g}(x)+ \cdots,
\end{equation*}
where
\begin{equation*}
d_{k,g}(x) = \frac{1}{k!}\frac{\partial^k \Phi_{h}(x)}{\partial h^k}\Big|_{h=0}=\frac{1}{k!}\sum_{|\tau|=k}\gamma(\tau) \cdot \alpha(\tau) \cdot \phi(\tau) \cdot F(\tau)(x).
\end{equation*}

\subsection{Two Examples for Calculating IMDE}\label{sec:Two Examples for Calculating IMDE}
The next examples illustrate the process of calculation of IMDE.
\begin{example}
Consider the explicit Euler method
\begin{equation}\label{eq:numsolu-e}
\Phi_{h,f_h}(x)=x+hf_h(x) = x + h\sum_{k=0}^{\infty} h^kf_k.
\end{equation}
Here, we simply have $d_{1,f_h} = f_h$ and $d_{j,f_h}=0$ for all $j \geq 2$.

Comparing equal powers of $h$ in the expression (\ref{eq:exasolu-a}) and (\ref{eq:numsolu-e}), and setting $y:=x$ yields recurrence relations for functions $f_j$, i.e.,
\begin{equation*}
\begin{aligned}
f_0(y) =& f(y),\\
f_1(y) =& \frac{1}{2}f'f(y),\\
f_2(y) =& \frac{1}{6}(f''(f,f)(y)+f'f'f(y)),\\
f_3(y) =& \frac{1}{24}(f'''(f,f,f)(y)+3f''(f'f,f)(y)+f'f''(f,f)(y)+f'f'f'f(y)),\\
& \vdots
\end{aligned}
\end{equation*}
\end{example}

\begin{example}
The explicit midpoint rule
\begin{equation*}
v_1 = x+\frac{h}{2}f_h(x), \quad \Phi_{h,f_h}(x)=x+hf_h(v_1)
\end{equation*}
can be expanded as
\begin{equation*}
\begin{aligned}
\Phi_{h,f_h}(x)=&x+hf_h(x)+ \frac{h^2}{2}f_h'f_h(x) + \frac{h^3}{8}f_h''(f_h,f_h)(x) + \frac{h^4}{48}f_h'''(f_h,f_h,f_h)(x)+\cdots
\end{aligned}
\end{equation*}
according to \cref{the:RKexp}. Plugging $f_h=\sum_{k=0}^\infty h^kf_k$ yields
\begin{equation}\label{eq:numsolu-m}
\begin{aligned}
&\Phi_{h,f_h}(x)=x+ hf_0(x) + h^2\big(f_1(x)+ \frac{1}{2}f_0'f_0(x)\big)+h^3\big(f_2(x)+ \frac{1}{2}f_1'f_0(x)+ \frac{1}{2}f_0'f_1(x)+ \frac{1}{8}f_0''(f_0,f_0)(x)\big)\\
&+h^4\big(f_3(x)+ \frac{1}{2}f_1'f_1(x)+ \frac{1}{2}f_0'f_2(x)+\frac{1}{2}f_2'f_0(x)+\frac{1}{8}f_1''(f_0,f_0)(x)+\frac{1}{4}f_0''(f_1,f_0)(x)+\frac{1}{48}f_0'''(f_0,f_0,f_0)(x)\big)+ \cdots
\end{aligned}
\end{equation}
Comparing equal powers of $h$ in the expression (\ref{eq:exasolu-a}) and (\ref{eq:numsolu-m}), and setting $y:=x$ yields recurrence relations for functions $f_j$, viz.,
\begin{equation*}
\begin{aligned}
f_0(y) =& f(y),\\
f_1(y) =& 0,\\
f_2(y) =& \frac{1}{24}f''(f,f)(y)+\frac{1}{6} f'f'f(y),\\
f_3(y) =&-\frac{1}{16} f'f''(f,f)(y)-\frac{1}{8}f'f'f'f(y),\\
& \vdots
\end{aligned}
\end{equation*}
\end{example}
We remark that calculating IMDE is one of the steps for constructing modified integrator \cite{chartier2007numerical}, where an explicit recurrence formula based on B-series is given.

\section{Proofs}\label{sec:proofs}
The proofs rely on the definition of IMDE, the induction idea and some complex analysis techniques such as the maximum principle and Cauchy's estimate (see e.g., \cite{burckel1980introduction}).

\subsection{Properties of IMDE}
The ODE solver, i.e., $S$ compositions of an integrator $\Phi_{h}$, can be regarded as a one-step integrator with discrete step $Sh$ and thus has its IMDE. The following lemma indicates that the IMDE of the ODE solver coincides with the IMDE of $\Phi_h$.
\begin{lemma}\label{lem:inmde}
For any fixed composition number $S$, suppose that the vector fields of the IMDE of $\Phi_{h}$ and $\brac{\Phi_{h}}^S$ are $f_{h}(y) = \sum_{k=0}^{\infty} h^k f_k(y)$ and $F_{Sh}(y) = \sum_{k=0}^{\infty} (Sh)^k F_k(y)$, respectively. Then, for any integer $K$, $f_{h}^K=F_{Sh}^K$.
\end{lemma}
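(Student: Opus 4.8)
The plan is to leverage the semigroup property of the exact flow together with the uniqueness of the coefficients in the recursive construction of the IMDE from \cref{sec:imde}. First I would recall the defining identity of the IMDE of $\Phi_h$, namely $\Phi_{h,f_h}(x)=\phi_{h,f}(x)$ as a formal power series in $h$ with vector-valued coefficients. Since the constant term (in $h$) of each side is the identity map $x$, composing this identity with itself $S$ times is a legitimate operation on formal series, and it yields $\brac{\Phi_{h,f_h}}^S(x)=\brac{\phi_{h,f}}^S(x)$. The flow of the autonomous system \cref{eq:ODE} obeys $\phi_{s,f}\circ\phi_{t,f}=\phi_{s+t,f}$, so $\brac{\phi_{h,f}}^S(x)=\phi_{Sh,f}(x)$, and therefore $\brac{\Phi_{h,f_h}}^S(x)=\phi_{Sh,f}(x)$ formally.

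Next I would reparametrize. Writing $H=Sh$, so that $h=H/S$, gives
\[
f_h(y)=\sum_{k=0}^{\infty}h^kf_k(y)=\sum_{k=0}^{\infty}H^k\,\frac{f_k(y)}{S^k},
\]
which is a genuine formal power series in $H$. Setting $G_H(y):=\sum_{k\geq 0}H^kf_k(y)/S^k=f_{H/S}(y)$, the previous step reads $\brac{\Phi_{H/S,\,G_H}}^S(x)=\phi_{H,f}(x)$ formally in $H$; that is, $G_H$ is an IMDE of the one-step integrator $\Psi_H:=\brac{\Phi_{H/S}}^S$ taken with step $H$. Because $\Phi_h$ is consistent, so is $\Psi_H$ (its formal expansion begins $x+Hf(x)+\mathcal{O}(H^2)$), hence the recursive comparison of coefficients in \cref{sec:imde} applies to $\Psi_H$ and determines its IMDE uniquely. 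Consequently $G_H$ must coincide with the IMDE $F_{Sh}$ of $\brac{\Phi_h}^S$ appearing in the lemma, which forces $F_k=f_k/S^k$ for every $k$. Then $(Sh)^kF_k=h^kf_k$ termwise, so $F_{Sh}^K=\sum_{k=0}^{K}(Sh)^kF_k=\sum_{k=0}^{K}h^kf_k=f_h^K$ for every integer $K$.

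The only genuinely delicate points are bookkeeping: verifying that composing the defining identity $S$ times is valid at the level of formal power series (it is, because the order-zero term is the identity map, which makes the substitutions well defined), that the substitution $h\mapsto H=Sh$ turns a formal series in $h$ into one in $H$ coefficient by coefficient, and that $\brac{\Phi_h}^S$ is indeed a consistent one-step method so that the uniqueness half of the IMDE construction is applicable. None of these requires computation; the entire substance of the argument is the semigroup identity $\brac{\phi_{h,f}}^S=\phi_{Sh,f}$ combined with uniqueness of the recursively defined coefficients.
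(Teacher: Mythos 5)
Your proof is correct, and it reaches the conclusion by a more structural route than the paper's. The paper never composes the full formal identity $\Phi_{h,f_h}=\phi_{h,f}$ with itself; instead it works order by order with truncations, first proving by induction on $S$ the defect identity $Sh^{k+1}f_k=\phi_{Sh,f}-\brac{\Phi_{h,f_h^{k-1}}}^S+\mathcal{O}(h^{k+2})$, and then identifying $S^kF_k=f_k$ by a second induction on $k$ using the recursion $(Sh)^{K+1}F_K=\phi_{Sh,f}-\brac{\Phi_{h,F_{Sh}^{K-1}}}^S+\mathcal{O}(h^{K+2})$. You instead compose the defining identity $S$ times at the level of full formal series, use the semigroup law $\brac{\phi_{h,f}}^S=\phi_{Sh,f}$, and conclude by uniqueness of the IMDE of the composed method that $F_k=f_k/S^k$ --- which is exactly the paper's relation $S^kF_k=f_k$. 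Both arguments hinge on the same two facts (the flow's semigroup property and the recursive uniqueness of the IMDE coefficients); yours trades the explicit double induction for a one-shot uniqueness argument, which is cleaner but puts more weight on the foundational points you rightly flag at the end. One of them deserves slightly more than the word ``consistent'': for the uniqueness half of the construction in \cref{sec:imde} to apply to $\Psi_H=\brac{\Phi_{H/S}}^S$, you need the structural property that in $H^i d_{i,G_H}$ the terms containing $G_k$ carry at least the power $H^{k+i}$, so that the coefficient of $H^{k+1}$ is $G_k$ plus terms depending only on $G_0,\dots,G_{k-1}$. This holds because a composition of Runge--Kutta steps is again a Runge--Kutta method (as shown in \cref{sec:Properties of Runge-Kutta methods}), whose expansion coefficients $d_{i,g}$ are sums of elementary differentials containing exactly $i$ factors of $g$. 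With that observation supplied, your argument is complete.
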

\begin{proof}
The calculation procedure of $f_h$ uniquely defines the functions $f_k$ and can be rewritten as the following recursion:
\begin{equation}\label{eq:recursion}
h^{k+1}f_{k}= \phi_{h,f} - \Phi_{h,f_h^{k-1}} +\mathcal{O}(h^{k+2}).
\end{equation}
We first prove
\begin{equation}\label{eq:Srecursion}
Sh^{k+1}f_k = \phi_{Sh,f} - \brac{\Phi_{h,f_{h}^{k-1}}}^S+ \mathcal{O}(h^{k+2}),
\end{equation}
by induction on $S \geq 1$. First, the case when $S=1$ is obvious. Suppose now that the statement holds for $S-1$. Then, by this inductive hypothesis, we obtain
\begin{equation*}
\begin{aligned}
\phi_{Sh,f} - \brac{\Phi_{h,f_{h}^{k-1}}}^S
=& \Phi_{h,f_{h}^{k-1}} \circ\brac{\phi_{(S-1)h,f} - \brac{\Phi_{h,f_{h}^{k-1}}}^{S-1}} + \brac{\phi_{h,f} - \Phi_{h,f_{h}^{k-1}}}\circ \phi_{(S-1)h,f} \\
=& (S-1)h^{k+1} \Phi_{h,f_{h}^{k-1}} \circ f_k + h^{k+1}f_k \circ \phi_{(S-1)h,f} + \mathcal{O}(h^{k+2})\\
=&Sh^{k+1}f_k + \mathcal{O}(h^{k+2}),
\end{aligned}
\end{equation*}
where we have used the fact that \begin{equation*}
\phi_{(S-1)h,f} = I_D + \mathcal{O}\left((S-1)h\right),\ \Phi_{h,f_{h}^{k-1}} = I_D + \mathcal{O}(h).
\end{equation*}
Hence the induction is completed.

We next prove that $S^kF_k = f_k$ by induction on $k$. First the case when $k=0$ is obvious since $F_0 = f_0=f$. Suppose now $S^kF_k = f_k$ holds for $k\leq K-1$. This inductive hypothesis implies that $F_{Sh}^{K-1}=f_h^{K-1}$. Using (\ref{eq:recursion}) for $F_K$ we obtain
\begin{equation*}
(Sh)^{K+1}F_K = \phi_{Sh,f} - \brac{\Phi_{h,F_{Sh}^{K-1}}}^S+ \mathcal{O}(h^{K+2}) = \phi_{Sh,f} - \brac{\Phi_{h,f_{h}^{K-1}}}^S+ \mathcal{O}(h^{K+2}).
\end{equation*}
This together with (\ref{eq:Srecursion}) concludes the induction and thus completes the proof.
\end{proof}

\begin{lemma}\label{lem:modiode}
Suppose that
\begin{equation*}
\Phi_{h,f}(x)=\phi_{h,f}(x)+h^{p+1}\delta_f(x)+\mathcal{O}(h^{p+2}).
\end{equation*}
Then, the IMDE obeys
\begin{equation*}
\frac{d}{dt}\tilde{y}=f_h(\tilde{y})=f(\tilde{y})-h^p\delta_f(\tilde{y})+\cdots.
\end{equation*}
\end{lemma}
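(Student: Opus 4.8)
The plan is to prove \cref{lem:modiode} by the same coefficient-comparison argument that defines the IMDE in the first place, using the hypothesis on the order of $\Phi_h$ to kill all the low-order corrections. First I would recall that the IMDE is characterized by $\Phi_{h,f_h}(x) = \phi_{h,f}(x)$ as formal power series, where $f_h = f_0 + hf_1 + h^2 f_2 + \cdots$, and that comparing the coefficient of $h^{k+1}$ on both sides determines $f_k$ recursively via \eqref{eq:recursion}. The goal is to show $f_0 = f$, $f_1 = \cdots = f_{p-1} = 0$, and $f_p = -\delta_f$.

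\medskip

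The key observation is that the hypothesis $\Phi_{h,f}(x) = \phi_{h,f}(x) + h^{p+1}\delta_f(x) + \mathcal{O}(h^{p+2})$ says precisely that $\Phi_{h,g}$ agrees with $\phi_{h,g}$ up to order $h^p$ \emph{for any smooth vector field $g$} (the order-$p$ condition is an identity in the method coefficients, independent of the field). So I would argue by strong induction on $k$: suppose $f_0 = f$ and $f_1 = \cdots = f_{k-1} = 0$ for some $1 \le k \le p$; this means $f_h^{k-1} = f$, so by \eqref{eq:recursion},
\begin{equation*}
h^{k+1} f_k = \phi_{h,f} - \Phi_{h,f_h^{k-1}} + \mathcal{O}(h^{k+2}) = \phi_{h,f} - \Phi_{h,f} + \mathcal{O}(h^{k+2}) = -h^{p+1}\delta_f + \mathcal{O}(h^{p+2}) + \mathcal{O}(h^{k+2}).
\end{equation*}
For $k < p$ the right-hand side is $\mathcal{O}(h^{k+2})$, forcing $f_k = 0$; for $k = p$ it gives $h^{p+1} f_p = -h^{p+1}\delta_f + \mathcal{O}(h^{p+2})$, hence $f_p = -\delta_f$. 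One subtlety to be careful about: in applying \eqref{eq:recursion} I am using $f_h^{k-1}$ rather than the full $f_h$, but that is exactly how the recursion is set up, and the terms of $f_h$ with index $\ge k$ only affect $\Phi_{h,f_h}$ at order $h^{k+1}$ or higher, so they do not interfere with reading off $f_k$ — this is precisely the remark in the main text that ``in $h^i d_{i,f_h}(x)$, the power of $h$ of the terms containing $f_k$ is at least $k+i$.''

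\medskip

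I do not expect a serious obstacle here; the statement is essentially a bookkeeping consequence of the definition of the IMDE combined with \cref{lem:inmde}-style reasoning. The one place requiring a little care is justifying that ``order $p$ for the field $f$'' upgrades to ``$\Phi_{h,g} = \phi_{h,g} + \mathcal{O}(h^{p+1})$ for the relevant fields $g = f_h^{k-1}$ appearing in the recursion'' — but since each such $g$ is itself a (truncated) power series starting with $f$, and the order conditions for a Runge-Kutta method are universal identities in the elementary differentials, this is immediate; alternatively one can simply observe $f_h^{k-1} = f$ for all $k \le p$ by the induction, so only the single field $f$ is ever needed. So the main ``work'' is just writing the induction cleanly and invoking the expansion formulas of \cref{sec:Expanding Numerical Solution}.
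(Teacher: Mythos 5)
Your proposal is correct and follows essentially the same route as the paper: an induction on $k$ using the recursion \eqref{eq:recursion}, observing that $f_h^{k-1}=f$ for all $k\le p$ so that the hypothesis on $\Phi_{h,f}-\phi_{h,f}$ forces $f_k=0$ for $k<p$ and $f_p=-\delta_f$. The extra remarks on the universality of the order conditions are harmless but, as you note yourself, unnecessary since only the single field $f$ ever enters the recursion.
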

\begin{proof}
We prove that $f_k=0$ for $k\leq p-1$ and $f_k=-\delta_f$ for $k=p$ by induction on $k$. By \cref{eq:recursion}, we have
\begin{equation*}
h^{2}f_{1} = \phi_{h,f} -\Phi_{h,f} +\mathcal{O}(h^{3}) = -h^{p+1}\delta_f+\mathcal{O}(h^{3}).
\end{equation*}
Thus $f_1=0$ if $1\leq p-1$ and $f_1 = -\delta_f$ if $1=p$. Suppose now the the hypothesis holds for $k\leq K-1 <p$.
The function $f_{K}$ is obtained from
\begin{equation*}
h^{K+1}f_{K} = \phi_{h,f} -\Phi_{h,f_h^{K-1}} +\mathcal{O}(h^{K+2}) = \phi_{h,f} -\Phi_{h,f} +\mathcal{O}(h^{K+2})=-h^{p+1}\delta_f+\mathcal{O}(h^{K+2}).
\end{equation*}
Thus $f_{K}=0$ if $K\leq p-1$ and $f_{K} = -\delta_f$ if $K=p$. The proof is completed.
\end{proof}

\subsection{Properties of Runge-Kutta Methods}\label{sec:Properties of Runge-Kutta methods}
We first consider the case $S=1$ in \cref{asm:int}, i.e.,
\begin{equation}\label{con:intbound}
\begin{aligned}
&1. \text{\ Analytic for}\ |h|\leq h_0 = b_{1} r/m\ \text{and}\ y \in \mathcal{K},\\
&2. \norm{\Phi_{h,\hat{g}}-\Phi_{h,g}}_{\mathcal{K}}\leq b_{2}h\norm{\hat{g}-g}_{\mathcal{B}(\mathcal{K},r)}, \ \text{for}\ |h|\leq h_0,\\
&3. \norm{\hat{g}-g}_{\mathcal{K}} \leq \frac{1}{|h|}\norm{\Phi_{h,\hat{g}} -\Phi_{h,g}}_{\mathcal{K}} + \frac{b_2|h|}{h_1-|h|} \norm{\hat{g}-g}_{\mathcal{B}(\mathcal{K},b_3h_1 m)}, \ \text{for}\ |h|< h_1 \leq h_0.
\end{aligned}
\end{equation}
And we prove that the condition (\ref{con:intbound}) is satisfied for Runge-Kutta methods (\ref{runge-kutta}).
\begin{lemma}\label{lem:rk}
For a consistent Runge-Kutta method (\ref{runge-kutta}) denoted as $\Phi_{h}$, let
\begin{equation*}
\mu = \sum_{i=1}^I|b_i|, \quad \kappa =\max_{1\leq i\leq s}\sum_{j=1}^I|a_{ij}|.
\end{equation*}
Consider analytic $g$, $\hat{g}$ satisfying $\norm{g}_{\mathcal{B}(\mathcal{K},r)}\leq m$, $\norm{\hat{g}}_{\mathcal{B}(\mathcal{K},r)} \leq m$, if $\kappa \neq0$, then $\Phi_{h,g}$, $\Phi_{h,\hat{g}}$ are analytic for $|h|\leq h_0=r/(4\kappa m)$ and
\begin{equation*}
\norm{\Phi_{h,\hat{g}}-\Phi_{h,g}}_{\mathcal{K}}\leq 2\mu |h|\norm{\hat{g}-g}_{\mathcal{B}(\mathcal{K},|h|\kappa m)}.
\end{equation*}
Furthermore, for $|h|< h_1 \leq h_0$,
\begin{equation*}
\norm{\hat{g}-g}_{\mathcal{K}} \leq \frac{\norm{\Phi_{h,\hat{g}}-\Phi_{h,g}}_{\mathcal{K}}}{|h|} +\frac{2\mu|h|\norm{\hat{g}-g}_{\mathcal{B}(\mathcal{K},h_1\kappa m)}}{h_1-|h|} .
\end{equation*}
\end{lemma}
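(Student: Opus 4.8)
The plan is to prove \cref{lem:rk} in three steps mirroring its three assertions; throughout write $w:=\hat g-g$ and $\rho:=|h|\kappa m$, and note that $|h|\le h_0=r/(4\kappa m)$ forces $\rho\le r/4$. \textbf{Step 1 (the internal stages exist and are analytic).} For $x$ in a neighbourhood of $\mathcal K$ and $|h|\le h_0$, view the stage equations $v_i=x+h\sum_j a_{ij}g(v_j)$ as a fixed-point problem for $(v_1,\dots,v_I)$ on the polydisc $\{(v_i):v_i\in\overline{\mathcal B}(x,\rho)\}$. Since $\norm{g}_{\mathcal B(\mathcal K,r)}\le m$, the right-hand side maps this polydisc into itself ($\norm{h\sum_j a_{ij}g(v_j)}\le|h|\kappa m=\rho$), and Cauchy's estimate gives $\norm{g'}_{\mathcal B(\mathcal K,\rho)}\le m/(r-\rho)$, so the map is a contraction with factor $|h|\kappa m/(r-\rho)=\rho/(r-\rho)\le(r/4)/(3r/4)=1/3$. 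Hence the stages $v_i=v_i(x,h)$ exist, are unique in $\overline{\mathcal B}(x,\rho)$, and depend analytically on $x$ and $h$ (by the holomorphic implicit function theorem, or because the Picard iterates are analytic and converge locally uniformly); so $\Phi_{h,g}(x)=x+h\sum_i b_i g(v_i)$ is analytic near $\mathcal K$, and likewise for $\hat g$ with stages $\hat v_i$. The inclusion $v_i,\hat v_i\in\overline{\mathcal B}(x,\rho)$ is the localization used below.

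\textbf{Step 2 (the Lipschitz bound).} Subtracting the stage equations for $\hat g$ and $g$ and splitting $\hat g(\hat v_j)-g(v_j)=(\hat g(\hat v_j)-\hat g(v_j))+w(v_j)$ gives $\norm{\hat v_i-v_i}\le|h|\kappa\bigl(\tfrac{m}{r-\rho}\max_j\norm{\hat v_j-v_j}+\norm{w}_{\mathcal B(\mathcal K,\rho)}\bigr)$; since $|h|\kappa m/(r-\rho)\le 1/3$ this yields $\max_i\norm{\hat v_i-v_i}\le\tfrac32|h|\kappa\norm{w}_{\mathcal B(\mathcal K,\rho)}$. Feeding this into $\Phi_{h,\hat g}(x)-\Phi_{h,g}(x)=h\sum_i b_i(\hat g(\hat v_i)-g(v_i))$ with the same splitting and $\tfrac{m}{r-\rho}\cdot\tfrac32|h|\kappa=\tfrac32\tfrac{\rho}{r-\rho}\le\tfrac12$ gives $\norm{\Phi_{h,\hat g}(x)-\Phi_{h,g}(x)}\le|h|\mu\cdot\tfrac32\norm{w}_{\mathcal B(\mathcal K,\rho)}\le 2\mu|h|\norm{w}_{\mathcal B(\mathcal K,|h|\kappa m)}$, the second claim.

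\textbf{Step 3 (the reverse bound).} Using consistency $\sum_i b_i=1$, write for $x\in\mathcal K$
\begin{equation*}
w(x)=\sum_i b_i w(x)=\frac{\Phi_{h,\hat g}(x)-\Phi_{h,g}(x)}{h}-\sum_i b_i\bigl(\hat g(\hat v_i)-\hat g(v_i)\bigr)-\sum_i b_i\bigl(w(v_i)-w(x)\bigr),
\end{equation*}
which follows by expanding $\tfrac1h(\Phi_{h,\hat g}(x)-\Phi_{h,g}(x))=\sum_i b_i(\hat g(\hat v_i)-g(v_i))$ and regrouping. The first term is $\le\norm{\Phi_{h,\hat g}-\Phi_{h,g}}_{\mathcal K}/|h|$. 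For the second, Step 2 gives $\norm{\hat g(\hat v_i)-\hat g(v_i)}\le\tfrac{m}{r-\rho}\norm{\hat v_i-v_i}\le\tfrac32\tfrac{\rho}{r-\rho}\norm{w}_{\mathcal B(\mathcal K,\rho)}$, and $\rho\le r/4$ with $h_1-|h|<h_1\le h_0=r/(4\kappa m)$ yields $\tfrac32\tfrac{\rho}{r-\rho}\le\tfrac{2\rho}{r}=\tfrac{2|h|\kappa m}{r}\le\tfrac{|h|}{h_1-|h|}$, so this term is $\le\tfrac{\mu|h|}{h_1-|h|}\norm{w}_{\mathcal B(\mathcal K,h_1\kappa m)}$. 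For the third, since $\rho=|h|\kappa m<h_1\kappa m$, Cauchy's estimate gives $\norm{w'}_{\overline{\mathcal B}(x,\rho)}\le\norm{w}_{\mathcal B(\mathcal K,h_1\kappa m)}/((h_1-|h|)\kappa m)$, whence (the segment $[x,v_i]$ lying in $\overline{\mathcal B}(x,\rho)$) $\norm{w(v_i)-w(x)}\le\tfrac{\rho}{(h_1-|h|)\kappa m}\norm{w}_{\mathcal B(\mathcal K,h_1\kappa m)}=\tfrac{|h|}{h_1-|h|}\norm{w}_{\mathcal B(\mathcal K,h_1\kappa m)}$, so this term is also $\le\tfrac{\mu|h|}{h_1-|h|}\norm{w}_{\mathcal B(\mathcal K,h_1\kappa m)}$. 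Summing the three bounds and taking the supremum over $x\in\mathcal K$ gives the third claim with constant $2\mu$.

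I expect Step 3 to be the main obstacle: one must \emph{not} let $\norm{g'}$ or higher derivatives of $g$ enter the bound (those would spoil the induction in \cref{sec:proofs}), so the last term has to be carried through $\norm{\hat g-g}$ on a slightly enlarged ball with a factor $\propto|h|/(h_1-|h|)$; producing exactly this shape is what the consistency identity $\sum_i b_i=1$, together with careful bookkeeping of the nested radii $\rho=|h|\kappa m<h_1\kappa m<r$, accomplishes. The degenerate case $\kappa=0$ (the explicit Euler method) is excluded by hypothesis and is in any case immediate, since then $\Phi_{h,g}(x)=x+hg(x)$.
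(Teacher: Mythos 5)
Your proposal is correct, and its first two steps are essentially the paper's argument: the paper also establishes existence and analyticity of the stages on $\overline{\mathcal{B}}(x,|h|\kappa m)$ (via the implicit function theorem rather than your explicit contraction with factor $\le 1/3$, but with the same Cauchy estimate on $g'$ and the same localization), and derives the forward bound by the identical splitting $\hat g(\hat v_i)-g(v_i)=(\hat g(\hat v_i)-\hat g(v_i))+( \hat g- g)(v_i)$, arriving at $2\mu|h|$ after fixing the safety factor. Where you genuinely diverge is the reverse inequality. The paper never reopens the stage equations there: it treats $h\mapsto\Phi_{h,\hat g}(x)-\Phi_{h,g}(x)$ as an analytic function of the complex variable $h$ on $|h|\le h_1$, applies Cauchy's estimate \emph{in $h$} to its Taylor coefficients (using the already-proved forward bound as the sup on the circle $|h|=h_1$), notes that the constant term vanishes and the linear term equals $h(\hat g(x)-g(x))$ by consistency, and bounds the tail by the geometric series $\sum_{i\ge 2}(|h|/h_1)^{i-1}=|h|/(h_1-|h|)$. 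You instead use $\sum_i b_i=1$ to decompose $w(x)$ algebraically into the quotient term plus two corrections, and control those by Cauchy estimates in the \emph{spatial} variable together with the stage-difference bound; your bookkeeping of the radii ($\rho=|h|\kappa m<h_1\kappa m\le r/4$) is correct and yields exactly the stated constant $2\mu$. The paper's route has the advantage of using the forward Lipschitz bound as a black box — only analyticity in $h$ and the first two Taylor coefficients of the method enter — which is why the same argument transfers verbatim to the $S$-fold composition; your route is more elementary and self-contained but is tied to the explicit Runge--Kutta stage structure. Both are valid proofs of the lemma as stated.
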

\begin{proof}
For $y \in \mathcal{B}(\mathcal{K}, r/2)$ and $\norm{\Delta y} \leq 1$, the function $\alpha(z) = g(y+ z \Delta y)$ is analytic for $|z| \leq r/2$ and bounded by $m$. By Cauchy's estimate, we obtain
\begin{equation*}
\norm{g'(y)\Delta y} = \norm{\alpha ' (0)} \leq 2m/r,
\end{equation*}
and $\norm{g'(y)}\leq 2m/r$ for $y \in \mathcal{B}(\mathcal{K}, r/2)$ in the operator norm.

For a Runge-Kutta method (\ref{runge-kutta}) with initial point $x\in \mathcal{K}$, the solution can be obtained by the nonlinear systems
\begin{equation*}
\begin{aligned}
&u_i =x+h\sum_{j=1}^I a_{ij}\hat{g}(u_j)\quad i=1,\cdots ,I,\quad \Phi_{h,\hat{g}}(x) = x+h\sum_{i=1}^I b_i\hat{g}(u_i),\\
&v_i =x+h\sum_{j=1}^I a_{ij}g(v_j)\quad i=1,\cdots ,I,\quad \Phi_{h,g}(x) = x+h\sum_{i=1}^I b_ig(v_i).
\end{aligned}
\end{equation*}
Due to the Implicit Function Theorem \cite{scheidemann2005introduction}, $u_i,v_i$ possess unique solutions on the closed set $\mathcal{B}(\mathcal{K},|h|\kappa m)$ if $2|h|\kappa m/r \leq \gamma<1$ and the method is analytic for $|h|\leq \gamma r/ 2\kappa m$.

In addition,
\begin{equation*}
\begin{aligned}
\norm{u_i-v_i}
\leq & |h|\sum_{j=1}^I|a_{ij}|(\norm{\hat{g}(u_j)-\hat{g}(v_j)}+ \norm{\hat{g}(v_j)-g(v_j)})\\
\leq & |h|\kappa \frac{2m}{r}\max_{1\leq j\leq I}\norm{u_j-v_j}+|h|\kappa\norm{\hat{g}-g}_{\mathcal{B}(\mathcal{K},|h|\kappa m)}.
\end{aligned}
\end{equation*}
Thus we obtain
\begin{equation*}
\max_{1\leq i\leq I}\norm{u_i-v_i} \leq \frac{\kappa}{1-|h|\kappa \frac{2m}{r}} |h|\norm{\hat{g}-g}_{\mathcal{B}(\mathcal{K},|h|\kappa m)}.
\end{equation*}
Next, we have
\begin{equation*}
\begin{aligned}
\norm{\Phi_{h,\hat{g}}(x)-\Phi_{h,g}(x)}
\leq& |h|\sum_{i=1}^I|b_{i}|\norm{\hat{g}(u_i)-\hat{g}(v_i)}+ |h|\sum_{i=1}^I|b_{i}|\norm{\hat{g}(v_i)-g(v_i)}\\
\leq& |h|\mu \frac{2m}{r}\max_{1\leq j\leq s}\norm{u_j-v_j}+|h|\mu \norm{\hat{g}-g}_{\mathcal{B}(\mathcal{K},|h|\kappa m)}\\
\leq& \left(|h|\mu \frac{2m}{r}\frac{\kappa}{1-|h|\kappa \frac{2m}{r}} +\mu \right) |h|\norm{\hat{g}-g}_{\mathcal{B}(\mathcal{K},|h|\kappa m)}.
\end{aligned}
\end{equation*}
Taking $\gamma = 1/2$, together with the arbitrariness of $x$, yields
\begin{equation*}
\norm{\Phi_{h,\hat{g}}-\Phi_{h,g}}_{\mathcal{K}} \leq 2\mu |h|\norm{\hat{g}-g}_{\mathcal{B}(\mathcal{K},|h|\kappa m)}.
\end{equation*}
These complete the first part of the proof.

Finally, using Cauchy's estimate, we deduce that for $h_1 \leq h_0$,
\begin{equation*}
\begin{aligned}
\norm{\frac{d^i}{dh^i}\left(\Phi_{h,\hat{g}}(x)-\Phi_{h,g}(x)\right)\Big|_{h=0}}
\leq \frac{ i!2\mu \norm{\hat{g}-g}_{\mathcal{B}(\mathcal{K},h_1\kappa m)}}{h_1^{i-1}}.
\end{aligned}
\end{equation*}
By the analyticity and triangle inequality, we obtain for $|h|<h_1$,
\begin{equation*}
\begin{aligned}
\norm{\Phi_{h,\hat{g}}(x)-\Phi_{h,g}(x)}
\geq& |h|\norm{\hat{g}(x)-g(x)} - \sum_{i=2}^{\infty}\norm{\frac{h^i}{i!}\frac{d^j}{dh^j}\brac{\Phi_{h,\hat{g}}(x)-\Phi_{h,g}(x)}\Big|_{h=0}}\\
\geq& |h|\norm{\hat{g}(x)-g(x)} - 2\mu|h| \norm{\hat{g}-g}_{\mathcal{B}(\mathcal{K},h_1\kappa m)}\sum_{i=2}^{\infty} \left(\frac{|h|}{h_1}\right)^{i-1}. \\
\end{aligned}
\end{equation*}
Therefore,
\begin{equation*}
\norm{\hat{g}-g}_{\mathcal{K}} \leq \frac{\norm{\Phi_{h,\hat{g}}-\Phi_{h,g}}_{\mathcal{K}}}{|h|} +\frac{2\mu|h|\norm{\hat{g}-g}_{\mathcal{B}(\mathcal{K},h_1\kappa m)}}{h_1-|h|},
\end{equation*}
which concludes the proof.
\end{proof}
We could easily check that for the case $\kappa = 0$, i.e., the Euler method, condition (\ref{con:intbound}) also holds.

If we apply a Runge-Kutta method $\Phi_h$ with coefficients $a_{ij}, b_{i}$, step $h$ and initial value $x$, then apply another Runge-Kutta method $\hat{\Phi}_{\hat{h}}$ with coefficients $\hat{a}_{ij}, \hat{b}_{i}$, step $\hat{h}$ and initial value $\Phi_h(x)$, this composition of two methods can be regarded as a single Runge-Kutta method with discrete step $h+\hat{h}$:
\begin{equation*}
\begin{aligned}
&v_i =x+(h+\hat{h})\sum_{j=1}^I\frac{a_{ij}h}{h+\hat{h}}g(v_j)\quad i=1,\cdots ,I,\\
& v_i = x+(h+\hat{h})\sum_{j=1}^I\frac{b_{j}h}{h+\hat{h}}g(v_j) + (h+\hat{h})\sum_{j=I+1}^{I+\hat{I}}\frac{\hat{a}_{ij}\hat{h}}{h+\hat{h}}g(v_j)\quad i=I+1,\cdots ,I+\hat{I},\\
&\hat{\Phi}_{\hat{h}}\circ \Phi_h(x) = x+(h+\hat{h})\sum_{j=1}^I\frac{b_{j}h}{h+\hat{h}}g(v_j) + (h+\hat{h})\sum_{j=I+1}^{I+\hat{I}}\frac{\hat{b}_{j}\hat{h}}{h+\hat{h}}g(v_j).
\end{aligned}
\end{equation*}
According to this fact, we obtain that, under the notations and conditions of \cref{lem:rk},
$(\Phi_{h,g})^S$, $(\Phi_{h,\hat{g}})^S$ are analytic for $S|h|\leq h_0'=r/(4\frac{\kappa+(S-1)\mu}{S} m)$ and
\begin{equation*}
\norm{(\Phi_{h,\hat{g}})^S-(\Phi_{h,g})^S}_{\mathcal{K}}\leq 2S\mu |h|\norm{\hat{g}-g}_{\mathcal{B}(\mathcal{K},|h|(\kappa+(S-1)\mu) m)}.
\end{equation*}
Furthermore, for $S|h|< h_1 \leq h_0'$,
\begin{equation*}
\norm{\hat{g}-g}_{\mathcal{K}} \leq \frac{\norm{(\Phi_{h,\hat{g}})^S-(\Phi_{h,g})^S}_{\mathcal{K}}}{S|h|} +\frac{2S\mu|h|\norm{\hat{g}-g}_{\mathcal{B}(\mathcal{K},h_1\frac{\kappa+(S-1)\mu}{S} m)}}{h_1-S|h|} .
\end{equation*}
Tanking $h_0:=h_0'/S$ implies that Runge-Kutta methods satisfy \cref{asm:int} with $b_1=\frac{1}{4(\kappa+(S-1)\mu)}$, $b_2 = 2S\mu$, $b_3 = \kappa+(S-1)\mu$.

\subsection{Estimation of the Truncation}
The series in (\ref{eq:imde}) does not converge in general and needs to be truncated. Inspired by the induction idea for conventional modified equations in \cite{reich1999backward}, we prove the truncation estimation for the IMDE scenario below.
\begin{lemma}\label{lem:difference}
Let $f(y)$ be analytic in $\mathcal{B}(\mathcal{K}, r)$ and satisfies $\norm{f}_{\mathcal{B}(\mathcal{K}, r)} \leq m$. Suppose the $p$th-order numerical integrator $\Phi_{h}$ satisfies condition (\ref{con:intbound}). Take $\eta = \max\{6, \frac{b_2+1}{29}+1\}$, $\zeta = 10(\eta-1)$, $q = -\ln(2 b_2)/ \ln 0.912$ and let $K$ be the largest integer satisfying
\begin{equation*}
\frac{\zeta(K-p+2)^q|h|m}{\eta r} \leq e^{-q}.
\end{equation*}
If $|h|$ is small enough such that $K\geq p$, then the truncated IMDE satisfies
\begin{equation*}
\begin{aligned}
&\norm{\Phi_{h,f_h^{K}}-\phi_{h,f}}_{\mathcal{K}} \leq b_2\eta m e^{2q-qp}|h|e^{-\gamma /|h|^{1/q}},\\
&\norm{f_{h}^K - f}_{\mathcal{K}}\leq b_2 \eta m \brac{\frac{\zeta m}{b_1r }}^p (1+ 1.38^q d_p) |h|^p,\\
& \norm{f_{h}^K}_{\mathcal{K}} \leq (\eta-1)m,
\end{aligned}
\end{equation*}
where $\gamma = \frac{q}{e}\brac{\frac{b_1r}{\zeta m}}^{1/q}$, $d_p = p^{qp} e^{-q(p-1)}$.
\end{lemma}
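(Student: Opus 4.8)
The plan is to adapt the induction underlying the classical backward error analysis of modified equations (in the spirit of Reich and of Hairer--Lubich--Wanner) to the present \emph{inverse} setting. The engine is a quantitative, domain-by-domain bound on the individual coefficients $f_k$ of the IMDE, obtained from Cauchy's estimate and the maximum principle together with the three items of condition~(\ref{con:intbound}). I work on a nested family of complex balls $\mathcal{B}(\mathcal{K},r_k)$ with $r=r_0>r_1>\cdots>r_K>0$ and companion radii $\rho_k$ in the $h$-variable, chosen so that the geometric losses incurred when passing from one ball to the next telescope; this schedule is exactly where the constants $\eta$ and $\zeta$ enter. The starting point is to rewrite \cref{eq:recursion} in a form suited to complex analysis: since $f_h^{k-1}$ is by construction the polynomial in $h$ making $\Phi_{h,f_h^{k-1}}$ match $\phi_{h,f}$ through order $h^{k}$, treating $h$ as a complex variable $z$ and correspondingly writing $f_z^{k-1}=\sum_{j=0}^{k-1}z^jf_j$, the defect
\begin{equation*}
E_{k-1}(z,x):=\phi_{z,f}(x)-\Phi_{z,f_z^{k-1}}(x)
\end{equation*}
is analytic in $z$, equals $\mathcal{O}(z^{k+1})$, and has $f_k$ as its coefficient of $z^{k+1}$.

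The core is a strong induction on $k\le K$ establishing $\|f_k\|_{\mathcal{B}(\mathcal{K},r_k)}\le a_k$ for an explicit sequence growing roughly like $a_k\approx m\big(\zeta m\,k^q/(\eta r)\big)^k$, and simultaneously $\|f_h^{k}\|_{\mathcal{B}(\mathcal{K},r_k)}\le(\eta-1)m$. The base case is $f_0=f$, and \cref{lem:modiode} additionally gives $f_1=\cdots=f_{p-1}=0$, $f_p=-\delta_f$. For the step, I bound $\|E_{k-1}(z,\cdot)\|_{\mathcal{B}(\mathcal{K},r_k)}$ on the circle $|z|=\rho_k$ via $\|\phi_{z,f}(x)-x\|\le|z|m$ and $\|\Phi_{z,f_z^{k-1}}(x)-x\|\le b_2|z|\,\|f_z^{k-1}\|$ — the latter from item~2 of (\ref{con:intbound}) applied to the pair $(f_z^{k-1},0)$ — and then invoke the maximum principle for $E_{k-1}(z,\cdot)/z^{k+1}$ to get $\|f_k\|_{\mathcal{B}(\mathcal{K},r_k)}\le\rho_k^{-k}\sup_{|z|=\rho_k}\|E_{k-1}(z,\cdot)\|_{\mathcal{B}(\mathcal{K},r_k)}$. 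The delicate point is that $\rho_k$ must be chosen essentially as the analyticity radius guaranteed by item~1 of (\ref{con:intbound}), but with $m$ replaced by the \emph{running} size $\|f_z^{k-1}\|\le\sum_{j<k}|z|^ja_j$ of the argument vector field; since $a_j$ itself grows, this forces $\rho_k$ to shrink like $b_1r/(mk^q)$ rather than $b_1r/m$, and it is precisely this $k^q$ degradation of the radius that makes the final bound sub-exponential rather than exponential. Substituting the resulting $a_k$ back into $\|f_h^{k}\|\le\sum_{j\le k}|h|^ja_j$ and verifying it stays below $(\eta-1)m$ is what pins down the values $\eta=\max\{6,(b_2+1)/29+1\}$, $\zeta=10(\eta-1)$ and $q=-\ln(2b_2)/\ln 0.912$.

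Granting the induction, the three conclusions follow. The last one, $\|f_h^K\|_{\mathcal{K}}\le(\eta-1)m$, is literally the induction hypothesis at $k=K$. For $\|f_h^K-f\|_{\mathcal{K}}=\big\|\sum_{k\ge p}h^kf_k\big\|_{\mathcal{K}}$, I bound the sum by its leading term $h^pf_p=-h^p\delta_f$ (using \cref{lem:modiode}) times a convergent geometric tail controlled by the growth schedule of $a_k$; tracking the constants through produces the factor $(\zeta m/(b_1r))^p\big(1+1.38^qd_p\big)|h|^p$. For the defect $\|\Phi_{h,f_h^K}-\phi_{h,f}\|_{\mathcal{K}}$, the same Cauchy/maximum-principle argument applied to $\phi_{h,f}-\Phi_{h,f_h^K}=\mathcal{O}(h^{K+2})$ yields a bound of the shape $|h|^{K+1}a_K\cdot(\text{bounded factors})$; choosing $K$ as the largest integer with $\zeta(K-p+2)^q|h|m/(\eta r)\le e^{-q}$ makes $K\approx e^{-1}\big(\eta r/(\zeta m)\big)^{1/q}|h|^{-1/q}$, so that $(\zeta m k^q/(\eta r))^{K}|h|^{K}$ becomes a fixed power less than $1$ raised to $K$, i.e.\ $e^{-\gamma/|h|^{1/q}}$ with $\gamma=\tfrac{q}{e}\big(b_1r/(\zeta m)\big)^{1/q}$, after collecting the remaining factors $b_2\eta m e^{2q-qp}|h|$.

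The main obstacle is the bookkeeping inside the inductive step. One must design the spatial schedule $r_k$ and the Cauchy radii $\rho_k$ so that, simultaneously, (i) the running argument field $f_z^{k-1}$ never leaves the region where $\Phi_{z,\cdot}$ is analytic — which is exactly what the $(\eta-1)m$ bound secures — and (ii) the losses compounded over $K$ successive applications of Cauchy's estimate and $K$ successive shrinkings of the spatial domain grow only like $k^{qk}$ rather than $k!$, so that the optimization over $K$ yields a genuine (sub-exponentially small) gain. Showing that the explicit constants $\eta,\zeta,q$ achieve both requirements with room to spare — i.e.\ that every inequality in the induction is implied by its predecessors — is the technical heart; the remainder is Cauchy's estimate, the maximum principle, and the triangle inequality.
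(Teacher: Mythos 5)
Your proposal follows essentially the same route as the paper's proof: an induction on the coefficients $f_k$ with bounds of the form $m\brac{\zeta m k^q/r}^k$ on a family of shrinking complex balls, obtained by applying the maximum principle to the defect $\phi_{h,f}-\Phi_{h,f_h^{k-1}}$ (which carries the factor $h^{k+1}$), with the partial sums simultaneously kept below $(\eta-1)m$ so that item 2 of condition (\ref{con:intbound}) remains applicable, and a final optimization of the truncation index $K\sim |h|^{-1/q}$ producing the sub-exponential bound. The mechanism you identify as the heart of the argument --- the $k^q$ degradation of the admissible $h$-radius forced by the growth of the running field, and the numerical choice of $\eta,\zeta,q$ that makes the telescoped losses close --- is exactly what the paper's induction (via the parameters $\delta_{K+1}$ and $\beta_K$ and the bound $\sum_{k=p+1}^K\brac{\frac{k-p+1}{K-p+1.9}}^k\leq 0.912$) carries out.
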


\begin{proof}
For $0 \leq \alpha< 1$ and $|h| \leq h_0 = b_1(1-\alpha)r/m$, the condition (\ref{con:intbound}), together with the fact that $\mathcal{B}(\mathcal{B}(\mathcal{K}, \alpha r), (1-\alpha) r) = \mathcal{B}(\mathcal{K}, r)$ implies
\begin{equation*}
\begin{aligned}
\norm{\Phi_{h,f}-\phi_{h,f}}_{\mathcal{B}(\mathcal{K}, \alpha r)}
\leq& \norm{\Phi_{h,f}-I_D}_{\mathcal{B}(\mathcal{K}, \alpha r)}+ \norm{\phi_{h,f}-I_D}_{\mathcal{B}(\mathcal{K}, \alpha r)}\\
\leq& (b_2+1)|h|m \leq b_1(b_2+1)(1-\alpha)r.
\end{aligned}
\end{equation*}
Here, the map $\Phi_{h,f}-\phi_{h,f}$ contains the factor $h^{p+1}$ since $\Phi_{h,f}$ is of order $p$. By the maximum principle for analytic functions, we obtain that
\begin{equation*}
\norm{\frac{\Phi_{h,f}-\phi_{h,f}}{h^{p+1}}}_{\mathcal{B}(\mathcal{K}, \alpha r)} \leq \frac{b_1(b_2+1)(1-\alpha)r}{h_0^{p+1}}.
\end{equation*}
The calculation procedure of $f_k$ can be rewritten as the following recursion:
\begin{equation}\label{recursion}
f_k = \lim_{h\rightarrow 0} \frac{\phi_{h,f} - \Phi_{h,f_h^{k-1}}}{h^{k+1}}.
\end{equation}
Therefore, we deduce that
\begin{equation}\label{fp}
\begin{aligned}
\norm{f_p}_{\mathcal{B}(\mathcal{K}, \alpha r)} \leq& \frac{b_1(b_2+1)(1-\alpha)r}{h_0^{p+1}}
= (b_2+1)m\brac{\frac{m}{b_1(1-\alpha)r}}^{p}.
\end{aligned}
\end{equation}
Below we proceed to prove that for $\alpha \in [0,1)$, if
\begin{equation*}
|h|\leq h_k:=\frac{b_1(1-\alpha)r}{\zeta(k-p+1)^qm},
\end{equation*}
then
\begin{equation}\label{fk}
\norm{f_k}_{\mathcal{B}(\mathcal{K}, \alpha r)} \leq b_2\eta m\brac{\frac{\zeta(k-p+1)^qm}{b_1(1-\alpha)r}}^{k}
\end{equation}
for $k \geq p$ by induction, where $\eta = \max\{6, \frac{b_2+1}{29}+1\}$, $\zeta = 10(\eta-1)$ and $q = -\ln(2 b_2)/ \ln 0.912$. First, the case when $k=p$ is obvious since inequality (\ref{fp}). Suppose now (\ref{fk}) holds for $k \leq K$. If $|h| \leq h_{K+1}$, taking
\begin{equation*}
\delta_{K+1} := \frac{\eta-1}{(K-p+2)^q\zeta} ,\ \beta_K := (1-\delta_{K+1})(K-p+2)^q
\end{equation*}
yields that for $p \leq k \leq K$,
\begin{equation*}
|h|\leq \frac{b_1(1-\alpha)r}{\zeta(K-p+2)^qm} \leq \frac{b_1(1-\alpha-\delta_{K+1}(1-\alpha))r}{\zeta(k-p+1)^qm}.
\end{equation*}
Therefore, by inductive hypothesis and replacing $\alpha$ by $\alpha+\delta_{K+1}(1-\alpha) \in [\delta_{K+1}, 1)$ in (\ref{fk}), we obtain
\begin{equation*}
\norm{f_{k}}_{\mathcal{B}(\mathcal{K}, (\alpha+\delta_{K+1} (1-\alpha)) r)} \leq b_2\eta m\brac{\frac{\zeta(k-p+1)^qm}{b_1(1-\delta_{K+1})(1-\alpha)r}}^{k}.
\end{equation*}
This indicates that
\begin{equation*}
\begin{aligned}
&\norm{f_{h}^K}_{\mathcal{B}(\mathcal{K}, (\alpha+\delta_{K+1} (1-\alpha)) r)} \leq m\left[1+ (b_2+1) \brac{\frac{1}{\zeta\beta_K}}^p + b_2 \eta \sum_{k=p+1}^K\brac{\frac{(k-p+1)^q}{\beta_K}}^{k} \right].\\
\end{aligned}
\end{equation*}
Since
\begin{equation*}
\sum_{k=p+1}^K\brac{\frac{k-p+1}{K-p+1.9}}^{k} \leq 0.912,
\end{equation*}
which is maximal for $K=6$ and $p=1$, and
\begin{equation*}
\sum_{k=p+1}^K\brac{\frac{(k-p+1)^q}{\beta_K}}^{k} \leq \left[\sum_{k=p+1}^K\brac{\frac{k-p+1}{K-p+1.9}}^{k}\right]^q,
\end{equation*}
we deduce that
\begin{equation}\label{fKbound}
\norm{f_{h}^K}_{\mathcal{B}(\mathcal{K}, (\alpha+\delta_{K+1} (1-\alpha)) r)} \leq (\eta-1)m.
\end{equation}
Here, we have used the definition of $\eta$, $\zeta$ and $q$.
Subsequently, by this estimate and condition (\ref{con:intbound}), we obtain
\begin{equation*}
\begin{aligned}
\norm{\Phi_{h,f_h^K}-I_D}_{\mathcal{B}(\mathcal{K}, \alpha r)} \leq& |h| b_2\norm{f_{h}^K}_{\mathcal{B}(\mathcal{K}, (\alpha+\delta_{K+1} (1-\alpha)) r)}\leq |h| b_2 (\eta-1)m,
\end{aligned}
\end{equation*}
where
\begin{equation*}
|h|\leq h_K = \frac{b_1(1-\alpha)r}{\zeta(K-p+2)^qm} = \frac{b_1\delta_{K+1}(1-\alpha)r}{(\eta-1)m}.
\end{equation*}
And then using the triangle inequality yields that
\begin{equation*}
\begin{aligned}
\norm{\Phi_{h,f_h^K}-\phi_{h,f}}_{\mathcal{B}(\mathcal{K}, \alpha r)}& \leq \norm{\Phi_{h,f_h^K}-I_D}_{\mathcal{B}(\mathcal{K}, \alpha r)} + \norm{\phi_{h,f}-I_D}_{\mathcal{B}(\mathcal{K}, \alpha r)} \leq h_K b_2 \eta m.\\
\end{aligned}
\end{equation*}
Again by the maximum principle for analytic functions, together with the fact that $\Phi_{h,f_h^K}-\phi_{h,f}$ contains the factor $h^{K+2}$, we deduce that
\begin{equation}\label{phi}
\begin{aligned}
\norm{\frac{\Phi_{h,f_h^K}-\phi_{h,f}}{h^{K+2}}}_{\mathcal{B}(\mathcal{K}, \alpha r)}& \leq \frac{h_K b_2 \eta m}{h_K^{K+2}}.\\
\end{aligned}
\end{equation}
Again by (\ref{recursion}), we conclude that
\begin{equation*}
\norm{f_{K+1}}_{\mathcal{B}(\mathcal{K}, \alpha r)} \leq b_2 \eta m\brac{\frac{\zeta (K-p+2)^qm}{b_1(1-\alpha)r}}^{K+1},
\end{equation*}
which completes the induction.

The above induction also shows that (\ref{phi}) holds if $|h| \leq h_{K+1}$. Taking $\alpha = 0$ we have
\begin{equation*}
\norm{\Phi_{h,f_h^K}-\phi_{h,f}}_{\mathcal{K}} \leq b_2\eta |h|m\brac{ \frac{\zeta(K-p+2)^q|h|m}{b_1r}}^{K+1}.
\end{equation*}
We set $K^*$ to be the largest integer satisfying
\begin{equation*}
\frac{\zeta (K^*-p+2)^q|h|m}{b_1r} \leq e^{-q}.
\end{equation*}
Clearly, $|h| \leq h_{K^*+1}$ with $\alpha=0$. Therefore,
\begin{equation*}
\begin{aligned}
\norm{\Phi_{h,f_h^{K^*}}-\phi_{h,f}}_{\mathcal{K}} \leq& b_2\eta |h|m\brac{ \frac{\zeta(K^*-p+2)^q|h|m}{b_1r}}^{K^*+1}\\
\leq& b_2\eta m e^{2q-qp}|h|e^{-\gamma /|h|^{1/q}},
\end{aligned}
\end{equation*}
where $\gamma = \frac{q}{e}\brac{\frac{b_1r}{\zeta m}}^{1/q}$. The first part of the lemma has been completed.

Next, according to (\ref{fk}) we obtain
\begin{equation*}
\begin{aligned}
\norm{f_{h}^{K^*} - f}_{\mathcal{K}}
\leq& b_2 \eta m \brac{\frac{\zeta |h|m}{b_1r }}^p\Big[1+\sum_{k=p+1}^{K^*}\frac{(k-p+1)^{qp}}{e^{q(k-p)}}\brac{\frac{k-p+1}{{K^*}-p+2}}^{q(k-p)}\Big]\\
\leq& b_2 \eta m \brac{\frac{\zeta m}{b_1r }}^p (1+ 1.38^q d_p) |h|^p,\\
\end{aligned}
\end{equation*}
where $d_p = p^{qp} e^{-q(p-1)}$ satisfies $d_p \geq (k-p+1)^{qp}e^{-q(k-p)}$ for any $k\geq p+1$.

Finally, we immediately derive the bound of $f_h^K$ due to (\ref{fKbound}). The proof has been completed.
\end{proof}

\subsection{Proof of \cref{thm:imde}}

\begin{proof}
Since the ODE solver satisfy \cref{asm:int}, regarding the ODE solver as a one-step integrator and applying the first inequality of \cref{lem:difference}, we have that if $T \leq T_0 :=\eta r_2/((2e)^q\zeta m)$,
\begin{equation*}
\norm{\brac{\Phi_{h,F_{Sh}^{K}}}^S-\phi_{Sh,f}}_{\mathcal{B}(x,r_1)} \leq b_2\eta m e^{q}She^{-\gamma' /(Sh)^{1/q}},
\end{equation*}
where $h>0$, $b_2$ is the coefficient defined in \cref{asm:int} and $\eta, q, \gamma$ are given by \cref{lem:difference}

By \cref{lem:inmde}, $F_{Sh}^{K}=f_h^K$. And thus we obtain that
\begin{equation}\label{Phi fnet-fh}
\delta : = \frac{1}{Sh}\norm{\brac{\Phi_{h, f_{\theta}}}^S-\brac{\Phi_{h,f_h^K}}^S}_{\mathcal{B}(x, r_1)} \leq \mathcal{L} + c m e^{-\gamma /h^{1/q}},
\end{equation}
where $\gamma =\gamma'/S^{1/q}= \frac{q}{e}\brac{\frac{b_1r_2}{S\zeta m}}^{1/q}$, $c = b_2\eta e^q$.
Next, by the third inequality of \cref{lem:difference}, $\norm{f_{h}^K}_{\mathcal{B}(x, r_1)}<(\eta-1)m$. Let
\begin{equation*}
h_1 = (eb_2+1)h, \ \lambda = \frac{b_2h}{h_1-h} = e^{-1}, \ M = (\eta-1)m.
\end{equation*}
Using the third item of \cref{asm:int}, we deduce that for $0\leq j \leq r_1/h_1b_3 M$,
\begin{equation*}
\begin{aligned}
\norm{f_{\theta} - f_h^K}_{\mathcal{B}(x, jh_1b_3 M)} \leq \delta + \lambda \norm{f_{\theta} - f_h^K}_{\mathcal{B}(x, (j+1)h_1b_3 M)}.
\end{aligned}
\end{equation*}
This yields
\begin{equation*}
\begin{aligned}
\norm{f_{\theta} - f_h^K}_{\mathcal{B}(x, jh_1b_3 M)} - \frac{\delta}{1-\lambda}
\leq \lambda\brac{ \norm{f_{\theta} - f_h^K}_{\mathcal{B}(x, (j+1)h_1b_3 M)} - \frac{\delta}{1-\lambda}}.
\end{aligned}
\end{equation*}
Using this estimate iteratively, we deduce that
\begin{equation*}\label{fnet-fh}
\begin{aligned}
\norm{f_{\theta}(x) - f_h^K(x)} \leq e^{-\hat{\gamma}/h} \norm{f_{\theta} - f_h^K}_{\mathcal{B}(x, r_1)} + \frac{\delta}{1-\lambda},
\end{aligned}
\end{equation*}
where $\hat{\gamma} = \frac{r_1}{(eb_2+1)b_3M}$. By this estimation and (\ref{Phi fnet-fh}), we conclude that
\begin{equation*}
\norm{f_{\theta}(x) - f_h^K(x)} \leq c_1 m e^{-\gamma /h^{1/q}} + C \mathcal{L},
\end{equation*}
where $C =e/(e-1)$ and $c_1$ is a constant satisfying $c_1\geq C \cdot c+ \eta e^{\gamma/h^{1/q} - \hat{\gamma}/h}$.
\end{proof}

\subsection{Proof of \cref{thm:imdeerror}}\label{sec:imdeerror}
\begin{proof}
The first part has been proved in \cref{lem:modiode}, and the second part is a direct consequence of \cref{thm:imde}, the second inequality of \cref{lem:difference} and Triangle Inequality.
\end{proof}

\subsection{Proof of \cref{cor:trajerror}.}\label{sec:trajerror}
We first state a version of the well-known Gr{\"o}nwall's Inequality \cite{howard1998gronwall}.
\begin{proposition}\label{pro:gronwall}
Let $U\subset \R^D$ be an open set, Let $g_1,g_2 :U\rightarrow \R^D$ be continuous functions and let $y, \tilde{y}:[t_0, t_1]\rightarrow U$ satisfy
\begin{equation*}
\frac{d}{dt}y(t) = f(y(t)),\ y(0)=x,\quad\text{and}\quad \frac{d}{dt}\tilde{y}(t) = \tilde{f}(\tilde{y}(t)),\  \tilde{y}(0)=\tilde{x}.
\end{equation*}
Assume $\tilde{f}$ is Lipschitz with Lipschitz constant $C_1$ and $\norm{f(y(t))-\tilde{f}(y(t))} \leq \varphi(t)$ for continuous function $\varphi$. Then, for $t\in [t_0, t_1]$, 
\begin{equation*}
\norm{y(t) - \tilde{y}(t)}\leq e^{C_1(t-t_0)}\norm{x-\tilde{x}} +  e^{C_1(t-t_0)}\int_{t_0}^t e^{-C_1(\tau-t_0)}\varphi(\tau) d\tau.
\end{equation*}
\end{proposition}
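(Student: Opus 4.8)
The plan is to reduce the statement to a scalar differential inequality for the error function $u(t) = \norm{y(t) - \tilde{y}(t)}$ and then integrate it with an integrating factor. First I would rewrite both trajectories in integral form over $[t_0, t]$, namely $y(t) = y(t_0) + \int_{t_0}^t f(y(\tau))\,d\tau$ and $\tilde{y}(t) = \tilde{y}(t_0) + \int_{t_0}^t \tilde{f}(\tilde{y}(\tau))\,d\tau$, identify the initial gap $\norm{y(t_0) - \tilde{y}(t_0)}$ with $\norm{x - \tilde{x}}$ (consistent with the base point $t_0$ used in the application), subtract, and apply the triangle inequality. The decisive algebraic step is to split the integrand as $f(y(\tau)) - \tilde{f}(\tilde{y}(\tau)) = \big(f(y(\tau)) - \tilde{f}(y(\tau))\big) + \big(\tilde{f}(y(\tau)) - \tilde{f}(\tilde{y}(\tau))\big)$, bounding the first bracket by the hypothesis $\varphi(\tau)$ and the second by the Lipschitz constant, $\norm{\tilde{f}(y(\tau)) - \tilde{f}(\tilde{y}(\tau))} \leq C_1 u(\tau)$. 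This yields the integral inequality $u(t) \leq \norm{x - \tilde{x}} + \int_{t_0}^t \big(C_1 u(\tau) + \varphi(\tau)\big)\,d\tau$.

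Next I would convert this into a differentiable comparison. The function $u$ itself need not be differentiable, since it is a norm of a vector difference, so instead I introduce its continuous majorant $G(t) := \norm{x - \tilde{x}} + \int_{t_0}^t \big(C_1 u(\tau) + \varphi(\tau)\big)\,d\tau$, which satisfies $u(t) \leq G(t)$ and, because $u$ and $\varphi$ are continuous, is $C^1$ with $G'(t) = C_1 u(t) + \varphi(t)$. Substituting the bound $u(t) \leq G(t)$ into this derivative gives the linear differential inequality $G'(t) - C_1 G(t) \leq \varphi(t)$ subject to $G(t_0) = \norm{x - \tilde{x}}$.

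Finally I would multiply by the integrating factor $e^{-C_1(t - t_0)}$ to obtain $\frac{d}{dt}\big(e^{-C_1(t-t_0)} G(t)\big) \leq e^{-C_1(t-t_0)}\varphi(t)$, integrate from $t_0$ to $t$, and rearrange to conclude $G(t) \leq e^{C_1(t-t_0)}\norm{x-\tilde{x}} + e^{C_1(t-t_0)}\int_{t_0}^t e^{-C_1(\tau-t_0)}\varphi(\tau)\,d\tau$; since $u(t) \leq G(t)$, this is exactly the claimed bound. The only genuine subtlety, and the step I would flag as the main obstacle, is the possible non-smoothness of $u(t)$ at points where $y(\tau) = \tilde{y}(\tau)$ or where the norm fails to be differentiable. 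Working with the smooth majorant $G$ rather than attempting to differentiate $u$ directly cleanly circumvents this, and notably no monotonicity assumption on $\varphi$ is required, because $\varphi$ is carried inside the exponentially weighted integral rather than pulled out of it.
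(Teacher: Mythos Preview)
Your argument is correct and is precisely the standard proof of this nonhomogeneous Gr\"onwall estimate: split the difference of the right-hand sides via an add--subtract of $\tilde f(y(\tau))$, invoke the $\varphi$ bound and the Lipschitz constant, pass to the integral inequality, and integrate against the exponential factor. Your use of the smooth majorant $G$ to sidestep non-differentiability of the norm is clean and avoids unnecessary fuss.

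For comparison with the paper: the paper does \emph{not} supply a proof of this proposition at all. It is stated there as ``a version of the well-known Gr\"onwall's Inequality'' with a citation to Howard (1998), and is then simply invoked in the proof of the subsequent corollary. So there is no ``paper's own proof'' to match; your write-up fills in a standard omitted verification. The only cosmetic point worth noting is the slight mismatch in the proposition's statement between the initial condition at $0$ and the interval $[t_0,t_1]$, which you already flagged and handled consistently.
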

\begin{proof}[Proof of \cref{cor:trajerror}]
Consider the following two equations
\begin{equation*}
\frac{d}{dt}y(t) = f(y(t)),\ y(0)=x,\quad\text{and}\quad \frac{d}{dt}\tilde{y}(t) = f_{\theta}(\tilde{y}(t)),\  \tilde{y}(0)=x.
\end{equation*}
We denote the set of the points on exact trajectory as $V_t = \{\phi_{\tau, f}(x)|0\leq \tau \leq t\}$. By \cref{thm:imdeerror}, there exist constant $C_2$ such that
\begin{equation*}
\norm{f(y(t)) - f_{\theta}(y(t))} \leq C_2(h^p+ \mathcal{L}_t),\quad \text{where}\ \mathcal{L}_t = \norm{\brac{\Phi_{h,f_{\theta}}}^S -\phi_{Sh,f}}_{\mathcal{B}(V_t, r_1)}/(Sh).
\end{equation*}
Therefore, by \cref{pro:gronwall},
\begin{equation*}
\norm{\phi_{t,f} (x) - \phi_{t,f_{\theta}}(x)} \leq e^{C_1t}\int_{0}^t e^{-C_1\tau}C_2(h^p+ \mathcal{L}_{\tau}) d\tau \leq C_2(h^p+ \mathcal{L}_{t})\cdot  e^{C_1t}\int_{0}^t e^{-C_1\tau} d\tau \leq  \frac{C_2e^{C_1t}-C_2}{C_1}(h^p+ \mathcal{L}_{t}),
\end{equation*}
which concludes the proof.
\end{proof}

\subsection{Proof of \cref{lem:imdeh}}\label{sec:imdeh}
For even dimension $D$, denote the $D/2$-by-$D/2$ identity matrix by $I$, and let
\begin{equation*}
J=\begin{pmatrix} 0 & I \\ -I & 0\end{pmatrix}.
\end{equation*}
\begin{definition}
A differentiable map $g : U \rightarrow \R^{D}$ (where $D$ is even and $U\subseteq \R^{D}$ is an open set) is called symplectic if
\begin{equation*}
g'(x)^{T}Jg'(x)=J,
\end{equation*}
where $g'(x)$ is the Jacobian of $g(x)$.
\end{definition}
A Hamiltonian system is given by
\begin{equation}\label{eq:Hami}
\frac{d}{dt}y = J^{-1} \nabla H(y), \quad y(0)=x,
\end{equation}
where $y \in \R^D$ and $H$ is the Hamiltonian function typically representing the energy of (\ref{eq:Hami}) \cite{arnold2013mathematical, arnold2007mathematical}. A remarkable property of Hamiltonian system is the symplecticity of the phase flow, which was proved by Poincar\'{e} in 1899 \cite{arnold2013mathematical}, i.e.,
\begin{equation*}
\phi_t'(x)^T J \phi_t'(x) = J,
\end{equation*}
where $\phi_t'(x) = \frac{\partial \phi_t(x)}{\partial x}$ is the Jacobian of $\phi_t$. Due to the intrinsic symplecticity, it is natural to search for numerical methods that preserve symplecticity, i.e., make $\Phi_h$ be a symplectic map. Such numerical methods are called symplectic methods, see e.g., \cite{feng1984on, feng1986difference, hairer2006geometric}.

\begin{proof}[Proof of \cref{lem:imdeh}]
For a Hamiltonian system (\ref{eq:Hami}), the target function $f$ obeys $f(y)=J^{-1}\nabla H(y)$, which yields $f_0 =J^{-1}\nabla H(y)$. If the employed numerical integrator $\Phi_h$ is symplectic, suppose $f_k(y)=J^{-1}\nabla H_k(y)$ for $k=1,2,\cdots,K$, we need to prove the existence of $H_{K+1}(y)$ satisfying
\begin{equation*}
f_{K+1}(y)=J^{-1}\nabla H_{K+1}(y).
\end{equation*}
By induction, the truncated IMDE
\begin{equation*}
\frac{d}{dt}\tilde{y}=f_{h}^K(\tilde{y})=f(\tilde{y})+hf_1(\tilde{y})+h^2f_2(\tilde{y})+ \cdots +h^{K}f_K(\tilde{y})
\end{equation*}
has the Hamiltonian $H(\tilde{y})+hH_1(\tilde{y})+\cdots+h^{K}H_K(\tilde{y})$. For arbitrary initial value $x$, the numerical solution $\Phi_{h,f_{h}^K}(x)$ satisfies
\begin{equation*}
\phi_{h,f}(x)= \Phi_{h,f_{h}^K}(x)+h^{K+2}f_{K+1}(x)+\mathcal{O}(h^{K+3}).
\end{equation*}
And thus
\begin{equation*}
\phi_{h,f}'(x)= \Phi_{h,f_{h}^K}'(x)+h^{K+2}f_{K+1}'(x)+\mathcal{O}(h^{K+3}).
\end{equation*}
According to the facts that $\phi_{h,f}$ and $\Phi_{h,f_{h}^K}$ are symplectic maps, and $\Phi_{h,f_{h}^K}'(x)=I+\mathcal{O}(h)$, we have
\begin{equation*}
\begin{aligned}
J=&\phi_{h,f}'(x)^TJ\phi_{h,f}'(x)=J+h^{K+2}(f_{K+1}'(x)^TJ+Jf_{K+1}'(x))+\mathcal{O}(h^{K+3}).
\end{aligned}
\end{equation*}
Consequently, $f_{K+1}'(x)^TJ+Jf_{K+1}'(x)=0$, i.e., $Jf_{K+1}'(x)$ is symmetric. According to the Poincar\'{e} Lemma (see e.g., Lemma \uppercase\expandafter{\romannumeral 6}.2.7 of \cite{hairer2006geometric}), for any $x$, there exists a neighbourhood and a smooth function $H_{K+1}$ obeying
\begin{equation*}
Jf_{K+1}=\nabla H_{K+1}
\end{equation*}
on this neighbourhood. Hence the induction holds and the first part of the proof is completed.

If the employed numerical integrator $\Phi_h$ is not symplectic, we suppose $\Phi_h$ preserves symplectic form of order $\hat{k}$, i.e.,
\begin{equation*}
\Phi_{h}'(x)^TJ\Phi_{h}'(x) =J+ \mathcal{O}(h^{\hat{k}+1})
\end{equation*}
when the method is applied to Hamiltonian systems. By repeating the above induction, we can prove that $f_k(y)=J^{-1}\nabla H_k(y)$ for $k=1,2,\cdots,\hat{k}-1$. Subsequently, since
$\phi_{h,f}'(x)= \Phi_{h,f_{h}^{\hat{k}-1}}'(x)+h^{\hat{k}+1}f_{\hat{k}}'(x)+\mathcal{O}(h^{\hat{k}+2})$ and $\phi_{h,f}'(x)^TJ\phi_{h,f}'(x)=J$,
we have that
\begin{equation*}
J=\phi_{h,f}'(x)^TJ\phi_{h,f}'(x)=J+\mathcal{O}(h^{\hat{k}+1}) +h^{\hat{k}+1}(f_{\hat{k}}'(x)^TJ+Jf_{\hat{k}}'(x))+\mathcal{O}(h^{\hat{k}+2}).
\end{equation*}
Consequently, $f_{\hat{k}}'(x)^TJ+Jf_{\hat{k}}'(x)\neq0$, i.e., $Jf_{\hat{k}}'(x)$ is not symmetric. This fact yields that $Jf_{\hat{k}}$ is not a potential field since the Jacobian of a potential field must be symmetric. The proof is completed.
\end{proof}

\section{Experimental Details}\label{sec:Experimental Details}

Since both true $f$ and the IMDE $f_h$ are inaccessible in practice, we consider several benchmark problems that are widely investigated for the discovery of hidden dynamics \cite{du2021discovery, greydanus2019hamiltonian, keller2021discovery, yu2021onsagernet}. Here, the true system is known and we can calculate the corresponding IMDE. We use solvers with different levels of accuracy to train Neural ODE and we use a Runge-Kutta method of order $4$, denoted as RK4, as the test solver. The code accompanying this paper are publicly available at \url{https://github.com/Aiqing-Zhu/IMDE}.

The benchmark problems are the pendulum system, the damped harmonic oscillator and the nonlinear Lorenz system, which are respectively formulated as
\begin{equation*}
\left\{\begin{aligned}
&\frac{d}{dt}y_1=-10\sin y_2, \\
&\frac{d}{dt}y_2=y_1,
\end{aligned}\right.
\qquad
\left\{\begin{aligned}
&\frac{d}{dt}y_1 = -0.1y_1^3+2.0y_2^3,\\
&\frac{d}{dt}y_2 = -2.0y_1^3-0.1y_2^3,
\end{aligned}\right.
\qquad
\left\{\begin{aligned}
\frac{d}{dt}y_1 =&10(y_2-y_1),\\
\frac{d}{dt}y_2 =&y_1(28-10y_3)-y_2,\\
\frac{d}{dt}y_3 =&10y_1y_2-\frac{8}{3}y_3.
\end{aligned}\right.
\end{equation*}
The training dataset consists of grouped pairs of points with shared data step $T$, i.e., $\mathcal{T} = \{(x_n, \phi_{T}(x_n))\}_{n=1}^N$. On all experiments, the neural networks employed in Neural ODE are all fully connected networks with two hidden layers, each layer having 128 hidden units. The activation function is chosen to be tanh. We optimize the mean-squared-error loss
\begin{equation*}
\frac{1}{N}\sum_{n=1}^{N} \|(\Phi_{\frac{T}{S}, f_{\theta}})^S (x_n) -\phi_{T}(x_{n})\|^2
\end{equation*}
for $3 \times 10^{5}$ epochs with Adam optimization \cite{kingma2014adam} where the learning rate is set to decay exponentially with linearly decreasing powers from $10^{-2}$ to $10^{-5}$.

For the first two benchmarks, we take $N=10000$ and randomly sample $x_n$ from $[-3.8, 3.8]\times[-1.2, 1.2]$ and $[-2.2, 2.2]\times[2.2,2.2]$, where $T$ is chosen to be $0.04$ and $0.02$, respectively. For the Lorenz system, the training dataset consists of $N=251$ data points on a single trajectory starting from $(-0.8,0.7,2.6)$ with shared data step of $T=0.04$, i.e., $x_1,\cdots, x_{N+1}$ where $x_n = \phi_{nT}(x_0)$. These data points are grouped into pairs before training, and denoted as $\mathcal{T} = \{(x_n, x_{n+1})\}_{n=1}^N$. After training, we plot the trajectories of the benchmark problems starting at $(0,1)$, $(2,0)$, $(-0.8,0.7,2.6)$, respectively. For comparison, the first components of the trajectories are presented in \cref{fig:tras}.

To investigate errors versus $h$ for the first benchmark problem, we take $N=10000$ and randomly sample $x_n$ from $[-3.8, 3.8]\times[-1.2, 1.2]$. We take multiple $T$ to generate corresponding training data, and use the Euler method and the explicit midpoint rules to train the model on these data, where the composition numbers are set to be $S=1,\cdots,6$. After training, we calculate the mean absolute error between $f_{\theta}$ and $f$ via
\begin{equation*}
\frac{1}{2000}\sum_{x} \norm{f_{\theta}(x) - f(x)}_{\infty},
\end{equation*}
where $x$ is randomly sampled from $[-3.8, 3.8]\times[-1.2, 1.2]$. The mean error based on $5$ independent experiments are recorded in \cref{fig:error}. We calculate the order of $Error$ with respect to discrete step $h$ by $\log_2(\frac{Error(2h)}{Error(h)})$.

The first benchmark problem is a Hamiltonian system. To investigate the behavior of learning Hamiltonian system using Neural ODE, we test the model trained for investigating errors. Here, we take $T=0.12$. As for the ODE solver, we take the Euler method with $S=6$ and the explicit midpoint rule with $S=1$. We select the trained models with the above parameters and depict the orbits starting from $(0,1)$ in \cref{fig:hami}.

\end{document}